\ifcvprfinal\pagestyle{empty}\fi
\renewcommand{\etal}{{\em et~al.}}
\newenvironment{packed_item}{
\begin{itemize}
  \setlength{\itemsep}{1pt}
  \setlength{\parskip}{2pt}
  \setlength{\parsep}{0pt}
}{\end{itemize}}
\newcommand{\Cpositive}{\ensuremath{C_\mathsf{positive}}}
\newcommand{\Cnegative}{\ensuremath{C_\mathsf{negative}}}
\DeclareMathOperator*{\argmax}{arg\,max}
\def\Instagram{\textit{Instagram}\xspace}
\def\Instagram{\textit{Instagram}\xspace}
\newcommand{\imagex}{\mathbf{x}}
\newcommand{\timagex}{$\imagex{}$}
\newcommand{\T}{\mathbf{T}}
\newcommand{\tT}{$\T{}$}
\newcommand{\J}{\mathbf{J}}
\newcommand{\tJ}{$\J{}$}
\newcommand{\D}{\mathbf{D}}
\newcommand{\tD}{$\D{}$\xspace}
\newcommand{\beq}{\begin{equation}}
\newcommand{\eeq}{\end{equation}}
\newcommand{\tDj}{$\D{}_\J{}$}
\definecolor{commentgrey}{rgb}{0.5,0.5,0.5}
\definecolor{graycolor}{rgb}{0.6,0.6,0.6}
\definecolor{darkergraycolor}{rgb}{0.5,0.5,0.5}
\definecolor{commentgrey}{rgb}{0.5,0.5,0.5}
\definecolor{graycolor}{rgb}{0.6,0.6,0.6}
\definecolor{darkergraycolor}{rgb}{0.5,0.5,0.5}
\begin{document}

\title{Visual Chirality}

\author{Zhiqiu Lin\textsuperscript{1} 
\qquad Jin Sun\textsuperscript{1,2} 
\qquad Abe Davis\textsuperscript{1,2} 
\qquad Noah Snavely\textsuperscript{1,2}\\
Cornell University\textsuperscript{1} \qquad Cornell Tech\textsuperscript{2}\\
}

\maketitle

\begin{abstract}
How can we tell whether an image has been mirrored? While we understand the geometry of mirror reflections very well, less has been said about how it affects distributions of imagery at scale, despite widespread use for data augmentation in computer vision. In this paper, we investigate how the statistics of visual data are changed by reflection. We refer to these changes as ``visual chirality,'' after the concept of geometric chirality---the notion of objects that are distinct from their mirror image. Our analysis of visual chirality reveals surprising results, including low-level chiral signals pervading imagery stemming from image processing in cameras, to the ability to discover visual chirality in images of people and faces. Our work has implications for data augmentation, self-supervised learning, and image forensics.
\end{abstract}
\section{Introduction}
\label{sec:Intro}

\vspace{-0.05cm}
\begin{quote}
\footnotesize{
``...there's a room you can see through the glass---that's just the same as our drawing room, only the things go the other way."}\\[3pt]
\scriptsize{\rightline{--- Lewis Carroll,\hspace{1.2cm}}\\
\rightline{``Alice's Adventures in Wonderland \& Through the Looking-Glass"\hspace{1.2cm}}}
\end{quote}
\vspace{-0.15cm}

There is a rich history of lore involving reflections. From the stories of Perseus and Narcissus in ancient Greek mythology to the adventures of Lewis Carroll's Alice and J.K. Rowling's Harry Potter, fiction is full of mirrors that symbolize windows into worlds similar to, yet somehow different from, our own. This symbolism is rooted in mathematical fact: 
what we see in reflections is consistent with a world that differs in subtle but meaningful ways from the one around us---right hands become left, text reads backward, and the blades of a fan spin in the opposite direction.
What we see is, as Alice puts it, ``just the same... only the things go the other way''. 

Geometrically, these differences can be attributed to a world where distances from the reflecting surface are negated, creating an orientation-reversing isometry with objects as we normally see them. While the properties of such isometries are well-understood in principle, much less is known about how they affect the statistics of visual data at scale. 
In other words, while we understand a great deal about how reflection changes image data, we know much less about how it changes what we learn from that data---this, despite widespread use of image reflection (e.g., mirror-flips) for data augmentation in computer vision. 

\begingroup
\setlength{\tabcolsep}{1pt} %
\renewcommand{\arraystretch}{1} %
\begin{figure}%
    \centering
    \begin{tabular}{ccc}
    \includegraphics[width=0.32\columnwidth]{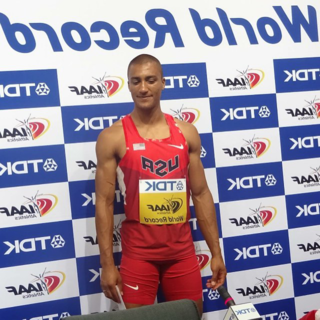} & 
    \includegraphics[width=0.32\columnwidth]{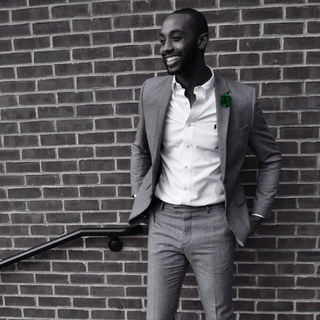} & 
    \includegraphics[width=0.32\columnwidth]{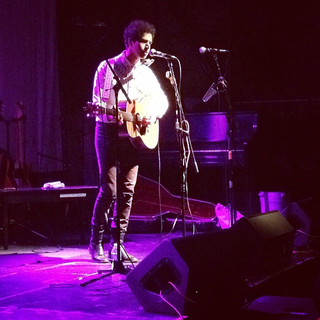} \\
    (a) & (b) & (c)\\
    \end{tabular}
    \caption{
    \textbf{Which images have been mirrored?} 
    Our goal is to understand how distributions of natural images differ from their reflections. Each of the images here appears plausible, but some subset have actually been flipped horizontally. Figuring out which can be a challenging task even for humans. Can you tell which are flipped? Answers are in Figure~\ref{fig:mirror-world-answer}.}
    \label{fig:mirror-world}%
    \vspace{-0.3cm}
\end{figure}
\endgroup

This paper is guided by a very simple question: 
How do the visual statistics of our world change when it is reflected?
One can understand some aspects of this question by considering the images in Figure~\ref{fig:mirror-world}. 
For individual objects, this question is closely related to the concept of \emph{chirality}~\cite{Kelvin-1894}. An object is said to be \emph{chiral} if it cannot be rotated and translated into alignment with its own reflection, and \emph{achiral} otherwise.\footnote{More generally, any figure is achiral if its symmetry group contains any orientation-reversing isometries.}
Put differently, we can think of chiral objects as being fundamentally changed by reflection---these are the things that ``go the other way" when viewed through a mirror---and we can think of achiral objects as simply being moved by reflection. Chirality provides some insight into our guiding question, but remains an important step removed from telling us how reflections impact learning. For this, we need a different measure of chirality---one we call \emph{visual chirality}---that describes the impact of reflection on distributions of imagery. 

In this paper, we define the notion of visual chirality, and analyze visual chirality in real world imagery, both through new theoretical tools, and through empirical analysis. Our analysis has some unexpected conclusions, including 1) deep neural networks are surprisingly good at determining whether an image is mirrored, indicating a significant degree of visual chirality in real imagery, 2) we can automatically discover high-level cues for visual chirality in imagery, including text, watches, shirt collars, face attributes, etc, and 3) we theoretically and empirically demonstrate the existence of low-level chiral cues that are imprinted in images by common image processing operations, including Bayer demosaicing and JPEG compression. These conclusions have implications in topics ranging from data augmentation to self-supervised learning and image forensics. For instance, our analysis suggests that low-level cues can reveal whether an image has been flipped, a common operation in image manipulation.

\begingroup
\setlength{\tabcolsep}{1pt} %
\renewcommand{\arraystretch}{1} %
\begin{figure}[t]
    \centering
    \begin{tabular}{ccc}
    \includegraphics[width=0.32\columnwidth]{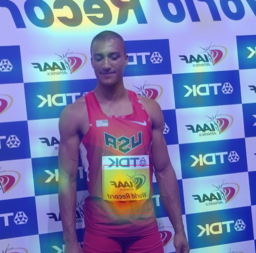} & 
    \includegraphics[width=0.32\columnwidth]{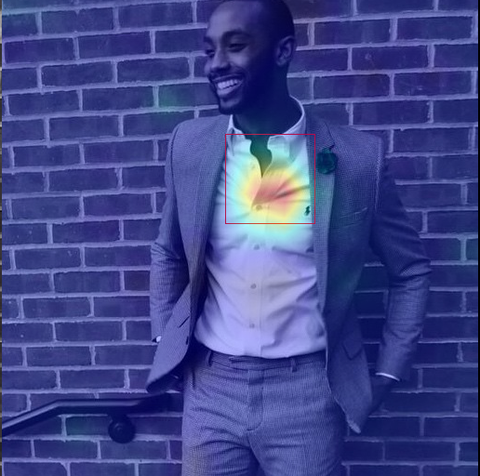} & 
    \includegraphics[width=0.32\columnwidth]{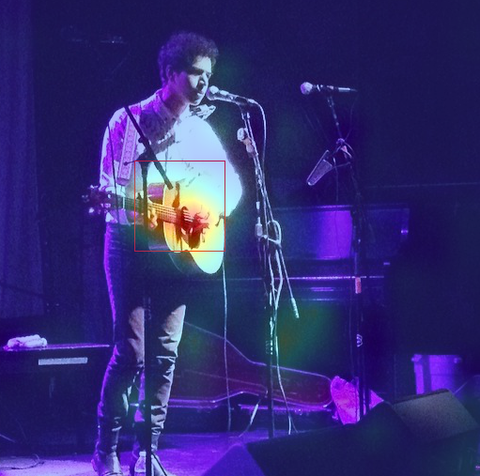} \\
    (a) & (b) & (c)\\
    \end{tabular}
    \caption{Images from Figure~\ref{fig:mirror-world} with  chirality-revealing regions highlighted. These regions are automatically found by our approach to chiral content discovery. (a, \textbf{flipped}) \textit{Text chirality.} Text (in any language) is strongly chiral. (b, \textbf{not flipped}) \textit{Object chirality.} The shirt collar, and in particular which side the buttons are on, exhibit more subtle visual chirality. (c, \textbf{flipped}) \textit{Object interaction chirality.} While guitars are often (nearly) symmetric, the way we hold them is not (the left hand is usually on the fretboard).}
    \label{fig:mirror-world-answer}%
\end{figure}

\subsection{Defining visual chirality}
\begin{figure}[t]
\centering
\includegraphics[width=0.9\columnwidth]{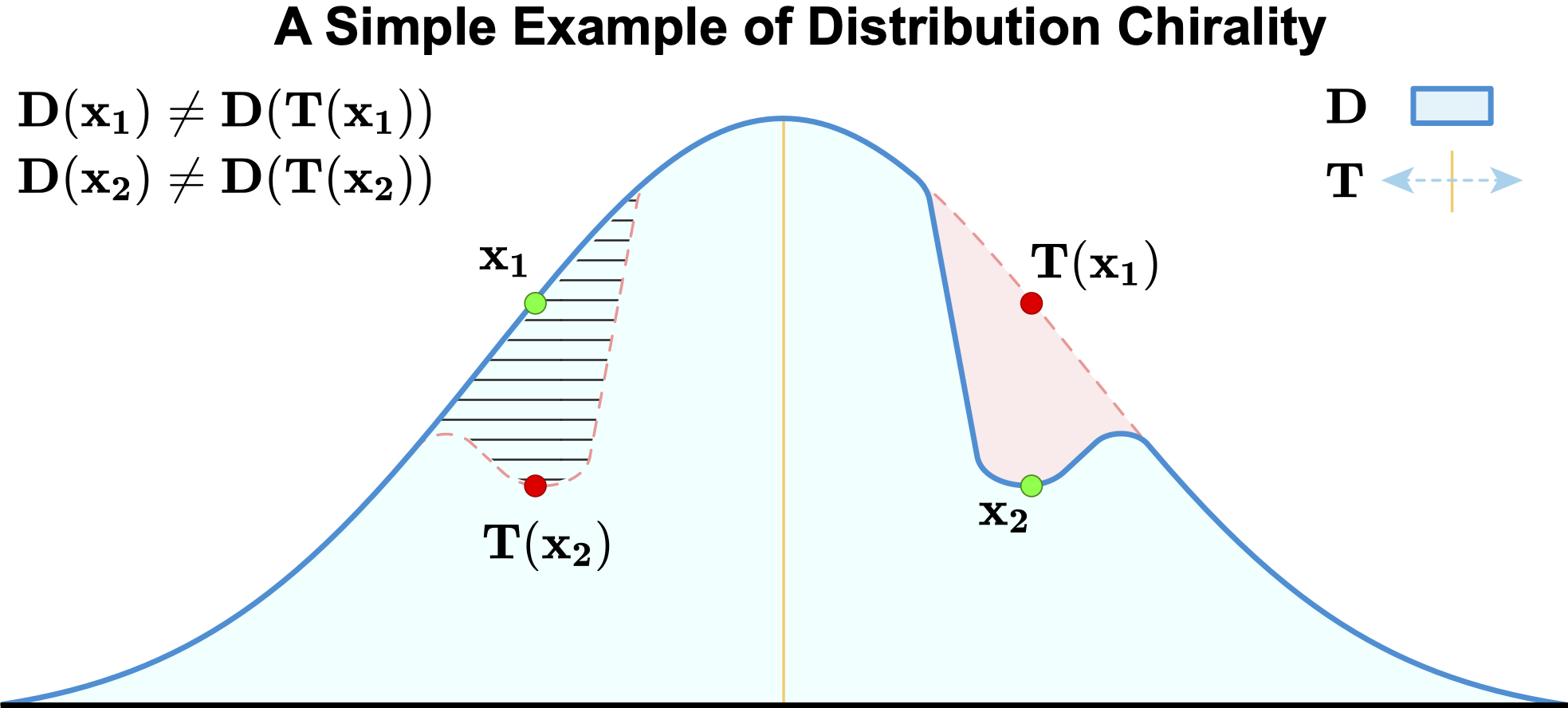}%
    \caption{The curve above represents a distribution over images (shown as a 1D distribution for simplicity). Using a transformation \tT{} to augment a sample-based approximation of the distribution \tD{} assumes symmetry with respect to \tT{}. We define visual chirality in terms of approximation error induced by this assumed symmetry when \tT{} is image reflection.}
    \label{fig:simple_chirality}%
\end{figure}

To define visual chirality, we first consider data augmentation for learning in computer vision to build intuition.
Machine learning algorithms are based on the idea that we can approximate distributions by fitting functions to samples drawn from those distributions. Viewed in this light, data augmentation can be seen as a way to improve sampling efficiency for approximating a distribution $\D{}(\imagex{})$ (where $\imagex{}$ represents data from some domain, e.g., images) by assuming that \tD{} is invariant to some transformation \tT{}. More precisely, augmenting a training sample \timagex{} with the function \tT{} assumes symmetry of the form:
\beq
\D(\imagex)=\D(\T(\imagex))
\label{eq:basic_symmetry}
\eeq
which allows us to double our effective sampling efficiency for \tD{} at the cost of approximation error wherever the assumed symmetry does not hold. This idea is illustrated in Figure \ref{fig:simple_chirality}.

Recall that for achiral objects reflection is equivalent to a change in viewpoint. Therefore, if we consider the case where \tD{} is a uniform distribution over all possible views of an object, and \tT{} is image reflection, then Equation \ref{eq:basic_symmetry} reduces to the condition for achirality. We can then define \emph{visual chirality} by generalizing this condition to arbitrary visual distributions. In other words, we define visual chirality as a measure of the approximation error associated with assuming visual distributions are symmetric under reflection. Defining visual chirality in this way highlights a close connection with data augmentation. Throughout the paper we will also see implications to a range of other topics in computer vision, including self-supervised learning and image forensics.

Note that our definition of visual chirality can also be generalized to include other transformations. In this paper we focus on reflection, but note where parts of our analysis could also apply more broadly.

\medskip
\noindent \textbf{Notes on visual chirality vs.\ geometric chirality.}
Here we make a few clarifying observations about visual chirality. First, while geometric chirality is a binary property of objects, visual chirality can be described in terms of how \emph{much} Equation \ref{eq:basic_symmetry} is violated, letting us discuss it as a continuous or relative property of visual distributions, their samples, or, as we will see in Section~\ref{sec:processing}, functions applied to visual distributions. Second, visual chirality and geometric chirality need not imply one another. For example, human hands have chiral geometry, but tend to be visually achiral because the right and left form a reflective pair and each occurs with similar frequency. Conversely, an achiral object with one plane of symmetry will be visually chiral when it is only viewed from one side of that plane. For the remainder of the paper we will refer to geometric chirality as such to avoid confusion.

\section{Related work}

Chirality has strong connections to symmetry, a long-studied topic in computer vision. Closely related to our work is recent work exploring the asymmetry of \emph{time} (referred to as ``Time's arrow'') in videos, by understanding what makes videos look like they are being played forwards or backwards~\cite{Pickup-14,Wei-18}---a sort of temporal chirality. We explore the spatial version of this question, by seeking to understand what makes images look normal or mirrored. This spatial chirality is related to other orientation problems in graphics in vision, such as detecting ``which way is up'' in an image or 3D model that might be oriented incorrectly~\cite{Vailaya-02,Fu-08}. 
Compared to upright orientation, chirality is potentially much more subtle---many images may exhibit quite weak visual chirality cues, including a couple of the images in Figure~\ref{fig:mirror-world}. Upright orientation and other related tasks have also been used as proxy tasks for unsupervised learning of feature representations~\cite{Gidaris-18}. Such tasks include the arrow of time task mentioned above~\cite{Wei-18}, solving jigsaw puzzles~\cite{Noroozi-16}, and reasoning about relative positions of image patches~\cite{Doersch-15}.

Our problem represents an interesting variation on the classic task of detecting symmetries in images~\cite{Liu-10}.
As such, our work is related to the detection and classification of asymmetric, chiral objects, as explored by Hel-Or~\etal in their work on ``how to tell left from right''~\cite{HelOr-88}, e.g., how to tell a left hand in an image from a right hand. However, this prior work generally analyzed \emph{geometric} chirality, as opposed to the \emph{visual chirality} we explore, as defined above---for instance, a right hand might be geometrically chiral but not visually chiral, while a right hand holding a pencil might visually chiral due to the prevalence of right-handed people.

Our work also relates to work on unsupervised discovery from large image collections, including work on identifying distinctive visual characteristics of cities or other image collections~\cite{Doersch-12,matzen2015bubblenet} or of yearbook photos over time~\cite{Ginosar-15}.

Finally, a specific form of chirality (sometimes referred to as \emph{cheirality}) has been explored in geometric vision. Namely, there is an asymmetry between 3D points in front of a camera and points in back of a camera. This asymmetry can be exploited in various geometric fitting tasks~\cite{Hartley-93}.

\section{Measuring visual chirality}\label{sec:model}
In principle, one way to measure visual chirality would be to densely sample a distribution and analyze symmetry in the resulting approximation. However, this approach is inefficient and in most cases unnecessary; we need not represent an entire distribution just to capture its asymmetry. Instead, we measure visual chirality by training a network to distinguish between images and their reflections. Intuitively, success at this task should be bound by the visual chirality of the distribution we are approximating.

Given a set of images sampled from a distribution, we cast our investigation of visual chirality as a simple classification task. 
Let us denote a set of training images from some distribution as $\Cpositive = \{I_1, I_2, \cdots, I_n\}$ (we assume these images are photos of the real world and have not been flipped). 
We perform a horizontal flip on each image $I_i$ to produce its reflected version $I_i'$. 
Let us denote the mirrored set as $\Cnegative = \{I_1', I_2', \cdots, I_n'\}$. 
We then assign a binary label $y_i$ to each image $I_i$ in $\Cpositive \cup \Cnegative$:
\begin{equation}
y_i = \left\{
\begin{array}{ll}
      0 & \text{if $I_i \in \Cnegative$, i.e., flipped}\\
      1 & \text{if $I_i \in \Cpositive$, i.e., non-flipped} \\
\end{array} 
\right. 
\end{equation}
We train deep Convolutional Neural Nets (CNNs) with standard classification losses for this problem, because they are 
good at learning complex distribution of natural images~\cite{Krizhevsky-12}.
Measuring a trained CNNs performance on a validation set provides insight on the visual chirality of data distribution we are investigating on.

Next we discuss details on training such a network and the techniques we use to discover the sources of visual chriality of the data distribution using a trained model as a proxy.

\medskip
\noindent \textbf{Network architecture.} We adopt a ResNet network \cite{he2016deep}, a widely used deep architecture for image classification tasks.
In particular, we use ResNet-50 and replace the last average pooling layer of the network with a global average pooling layer \cite{GAP} in order to support variable input sizes.

\medskip
\noindent \textbf{Optimization.} We train the network in a mini-batch setting using a binary cross-entropy loss. %
We optionally apply random cropping, and discuss the implications of such data augmentation below.
We normalize pixel values by per-channel mean-subtraction and dividing by the standard deviation.
We use a stochastic gradient descent optimizer~\cite{SGD} with momentum 0.9 and $L_2$ weight decay of $10^{-5}$. 

\medskip
\noindent \textbf{Hyperparameter selection.} Finding a suitable learning rate is important for this task. We perform a grid search in the log domain and select the best learning rate for each experiment by cross-validation.

\medskip
\noindent \textbf{Shared-batch training.} During training, we include both $I_i$ and $I_i'$ (i.e., positive and negative chirality versions of the same image) in the same mini-batch. We observe significant improvements in model performance using this approach, in alignment with prior self-supervised learning methods~\cite{Gidaris-18}.

\medskip

\noindent\textbf{Discovering sources of visual chirality.}
If a trained model is able to predict whether an image is flipped or not with high accuracy, it must be using a reliable set of visual features from the input image for this task.
We consider those cues as the source of visual chrility in the data distribution.

We use Class Activation Maps (CAM) \cite{zhou2016learning} as a powerful tool to visualize those discriminative regions from a trained model.
Locations with higher activation values in CAM make correspondingly larger contributions to predicting flipped images.

Throughout this paper, we visualize these activation maps as heatmaps using the Jet color scheme (red=higher activations, blue=lower activations). We only compute CAM heatmaps corresponding to an image's correct label.
Figure~\ref{fig:mirror-world-answer} shows examples of such class activation maps.

\bigskip
\noindent In the following sections, we analyze visual chirality discovered in different settings using the tools described above.

\begin{figure}[t]
\begin{center}
\begin{tabular}{cc}
\includegraphics[width=0.47\columnwidth]{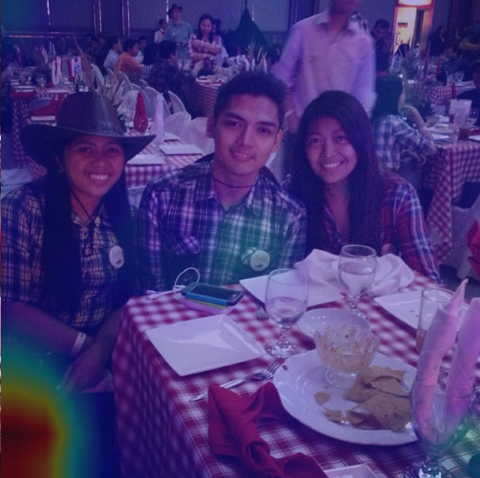} &
\includegraphics[width=0.47\columnwidth]{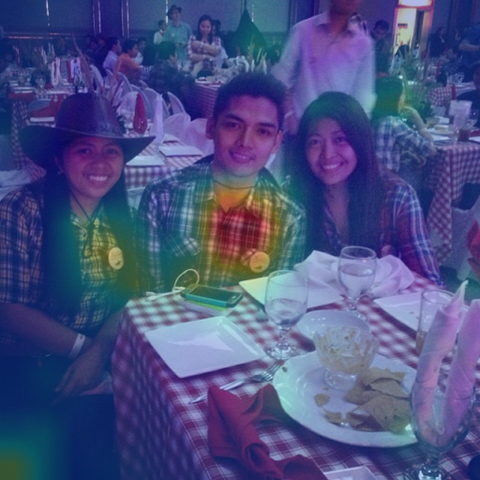} \\
(a) Resizing & (b) Random Cropping \\
\end{tabular}
\end{center}
\caption{\textbf{Resizing vs.\ random cropping as dataset preprocessing}. This figure shows CAM heatmaps for an image from models trained with two preprocessing methods: (a) resizing and (b) random cropping. We observe that the resizing scheme learns cues in the edges or corners of images (note the focus on the lower left corner of (a)), where JPEG encoding can be asymmetric. On the other hand, the random cropping scheme captures the meaningful high-level cue---the chiral shirt collar.}\label{fig:preprocess}
\end{figure}

\section{The chirality of image processing}\label{sec:processing}
When we first attempted to train our model to distinguish between images and their reflections, we quickly observed that the network would find ways to accomplish this task using low-level cues that appeared only loosely correlated with the image's content. Furthermore, the strength of these cues seemed to vary a great deal with changes in how data was prepared. For example, Figure \ref{fig:preprocess} shows two different CAM heatmaps for the same sample image. The left is derived from a network trained on resized data, and the right is derived from a network trained on random crops of the same data. Both maps identify a dark corner of the image as being discriminative, as well as part of the shirt on one of the the image's human subjects. However, these networks appear to disagree about the relative strength of the chiral cues in these regions. This result illustrates how the way we capture and process visual data---even down to the level of Bayer mosaics in cameras or JPEG compression---can have a significant impact on its chirality. In this section and the supplemental material we develop theoretical tools to help reason about that impact, and use this theory to predict what networks will learn in experiments.

\subsection{Transformation commutativity}
The key challenge of predicting how an imaging process will affect chirality is finding a way to reason about its behavior under minimal assumptions about the distribution to which it will be applied. For this, we consider what it means for an arbitrary imaging transformation \tJ{} to preserve the symmetry of a distribution \tD{} (satisfying Equation \ref{eq:basic_symmetry}) under a transformation \tT{}. There are two ways we can define this. The first is simply to say that if some symmetry exists in the distribution \tD{} then the same symmetry should exist in \tDj{}, the transformation of that distribution by \tJ{}. The second is to say that if elements $\imagex_a$ and $\imagex_b$ are related by $\imagex_b=\T\imagex_a$, then this relationship should be preserved by \tJ, meaning $\J\imagex_b=\T\J\imagex_a$. In our supplemental material we show that both definitions hold when $\J$ commutes with $\T$, and that the second definition does not hold when \tJ{} does not commute with \tT{}. With these observations, commutativity becomes a powerful tool for predicting how a given process \tJ{} can affect chirality.

\subsection{Predicting chirality with commutativity}
In our supplemental material we provide a detailed analysis of the commutativity of Bayer demosaicing, JPEG compression, demosaicing + JPEG compression, and all three of these again combined with random cropping. We then show that, in all six cases, our analysis of commutativity predicts the performance of a network trained from scratch to distinguish between random noise images and their reflection. These predictions also explain our observations in Figure \ref{fig:preprocess}. While the full details are presented in the supplemental material, some key highlights include:
\begin{packed_item}
  \item Demosaicing and JPEG compression are both individually chiral and chiral when combined.
  \item When random cropping is added to demosaicing or JPEG compression individually, they become achiral.
  \item When demosaicing, JPEG compression, and random cropping are all combined, the result is chiral.
\end{packed_item}
This last conclusion is especially surprising---it implies that common image processing operations inside our cameras may leave a \emph{chiral imprint}, i.e., that they imprint chiral cues that are imperceptible to people, but potentially detectable by neural networks, and that these features are robust to random cropping.
Thus, these conclusions have implications on image forensics. For instance, our analysis gives us new theoretical and practical tools for determining if image content has been flipped, a common operation in image editing.

Finally, our analysis of how commutativity relates to the preservation of symmetries 
makes only very general assumptions about \tJ{}, \tT{}, and \tD{}, making it applicable to more arbitrary symmetries. For example, Doersch~\etal~\cite{Doersch-15} found that when they used the relative position of different regions in an image as a signal for self-supervised learning, the networks would ``cheat” by utilizing chromatic aberration for prediction. Identifying the relative position of image patches requires asymmetry with respect to image translation. Applied to their case, our analysis is able to predict that chromatic aberration, which does not commute with translation, can provide this asymmetry.

\section{High-level visual chirality}\label{sec:highlevel}

While analysis of chiralities that arise in image processing have useful implications in forensics, we are also interested in understanding what kinds of high-level visual content (objects, object regions, etc.) reveals visual chirality, and whether we can discover these cues automatically. 
As described in Section~\ref{sec:processing}, if we try to train a network from scratch, it invariably starts to pick up on uninterpretable, low-level image signals. 
Instead, we hypothesize that if we start with a ResNet network that has been pre-trained on ImageNet object classification, then it will have a 
familiarity with objects that will allow it to avoid picking up on low-level cues. Note, that such ImageNet-trained networks should \emph{not} have features sensitive to specifically to chirality---indeed, as noted above, many ImageNet classifiers are trained using random horizontal flips as a form of data augmentation.

\medskip
\noindent \textbf{Data.} What distribution of images do we use for training? We could try to sample from the space of all natural images. 
However, because we speculate that many chirality cues have to do with people, and with manmade objects and scenes, we start with images that feature \emph{people}. In particular, we utilize the StreetStyle dataset of Matzen~\etal~\cite{matzen17streetstyle}, which consists of millions of images of people gathered from Instagram. 
For our work, we select a random subset of 700K images from StreetStyle, and refer to this as the \Instagram dataset; example images are shown in Figures~\ref{fig:mirror-world} and \ref{fig:instaclusters}. We randomly sample 5K images as a test set $S_\mathsf{test}$, and split the remaining images into training and validation sets with a ratio of 9:1 (unless otherwise stated, we use this same train/val/test split strategy for all experiments in this paper).

\medskip
\noindent \textbf{Training.} We trained the chirality classification approach described in Section~\ref{sec:model} on \Instagram, starting from an ImageNet-pretrained model. As it turns out, the transformations applied to images before feeding them to a network are crucial to consider. Initially, we downsampled all input images bilinearly to a resolution of 512$\times$512. A network so trained achieves a \textbf{$92\%$} accuracy on the \Instagram test set, a surprising result given that determining whether an image has been flipped can be difficult even for humans.

As discussed above, it turns out that our networks were still picking up on traces left by low-level processing, such as boundary artifacts produced by JPEG encoding, as evidenced by CAM heatmaps that often fired near the corners of images. 
In addition to pre-training on ImageNet, we found that networks can be made more resistant to the most obvious such artifacts by performing random cropping of input images. 
In particular, we randomly crop a 512$\times$512 window from the input images during training and testing (rather than simply resizing the images).  A network trained in such a way still achieves a test accuracy to 80\%, still a surprisingly high result.

\begin{table}[t]
\begin{center}
\begin{tabular}{llcc}
\toprule 
{Training set} & Preprocessing & \multicolumn{2}{c}{{Test Accuracy}} \\
\cmidrule{3-4}
&  &  \Instagram & F100M \\
 \midrule
\Instagram & Resizing &  0.92 & 0.57\\
\Instagram & RandCrop & 0.80 & 0.59\\
\Instagram (no-text) & RandCrop &  0.74 & 0.55 \\
\bottomrule
\end{tabular}
\end{center}
\caption{\textbf{Chirality classification performance of models trained on \Instagram.} Hyper-parameters were selected by cross validation. The first column indicates the training dataset, and the second column the processing that takes place on input images. The last columns report on a held-out test set, and on an unseen dataset (Flickr100M, or F100M for short). Note that the same preprocessing scheme (resize vs.\ random crop) is applied to both the training and test sets, and the model trained on \Instagram without text is also tested on \Instagram without text.}
\label{tab:object}
\end{table}

\begin{figure*}[htp!]
\centering
\resizebox{\textwidth}{!}{
\begin{tabular}{m{0.25cm} m{2.62cm} m{2.62cm} m{2.62cm} m{2.62cm} m{2.62cm}}
\rotatebox{90}{Smartphones} &
\includegraphics[width=2.7cm]{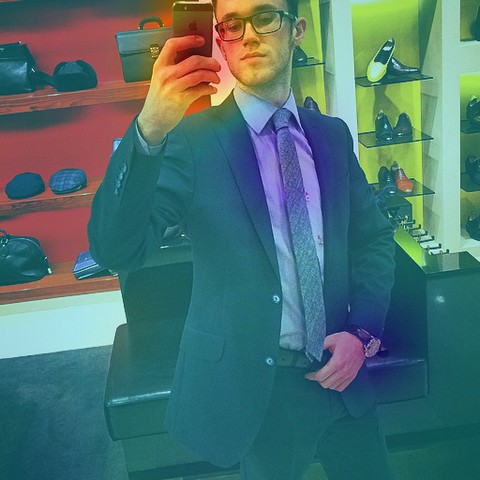} &
\includegraphics[width=2.7cm]{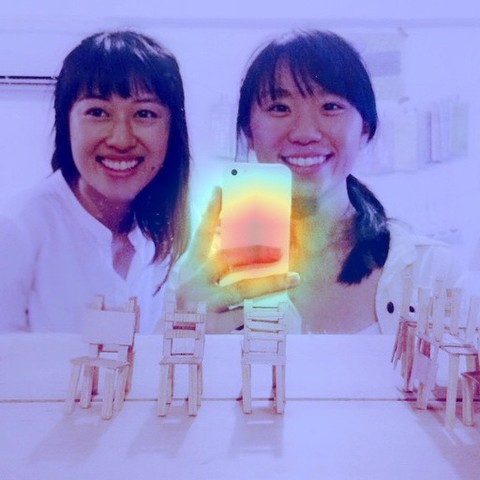} &
\includegraphics[width=2.7cm]{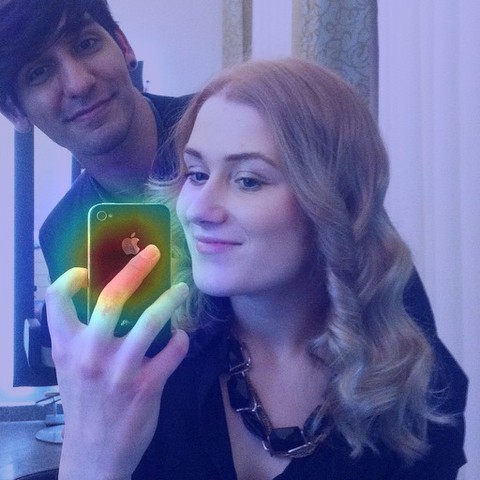} &
\includegraphics[width=2.7cm]{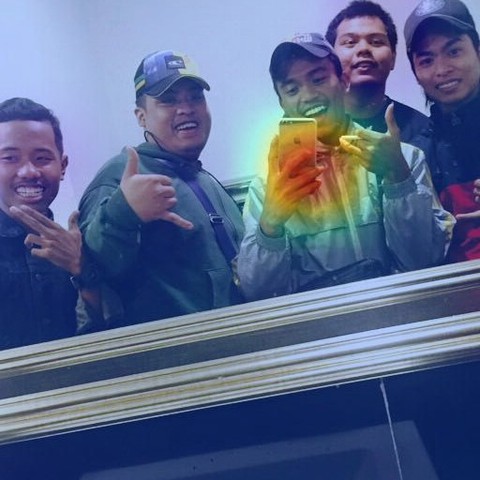} &
\includegraphics[width=2.7cm]{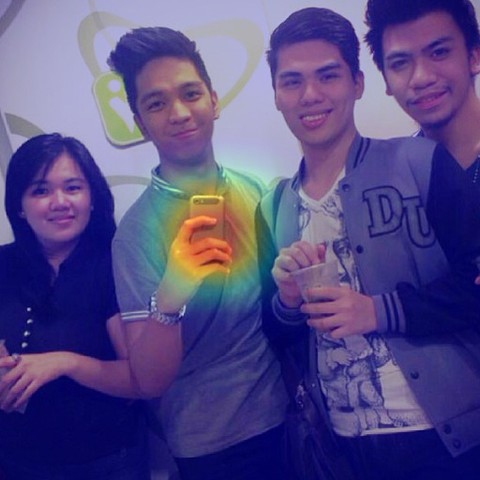} \\
\rotatebox{90}{Guitars} &
\includegraphics[width=2.7cm]{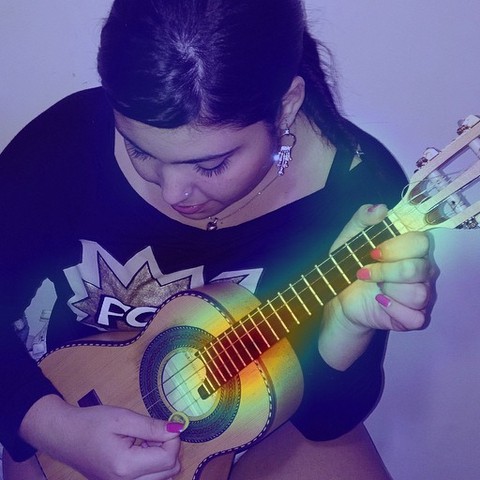} &
\includegraphics[width=2.7cm]{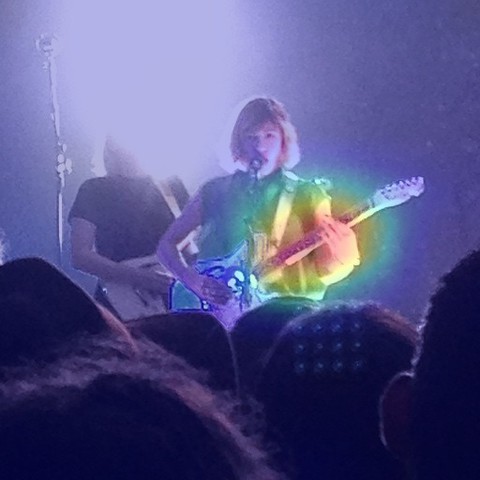} &
\includegraphics[width=2.7cm]{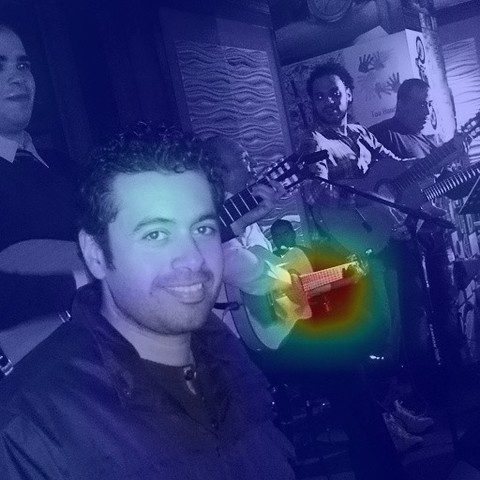} &
\includegraphics[width=2.7cm]{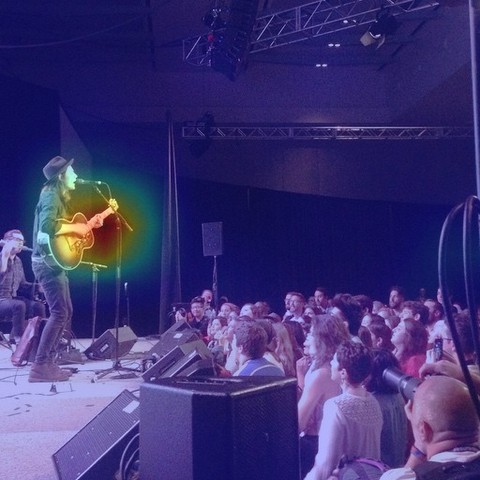} &
\includegraphics[width=2.7cm]{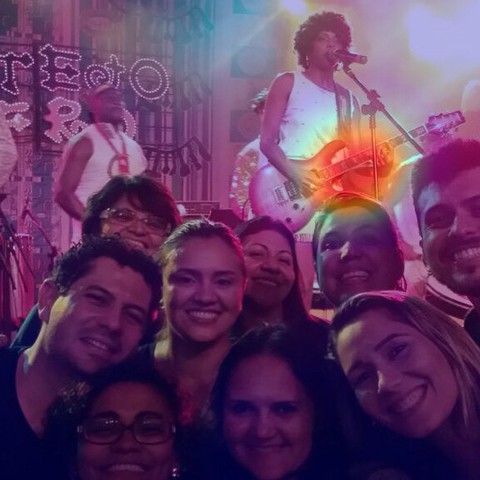} \\
\rotatebox{90}{Watches} &
\includegraphics[width=2.7cm]{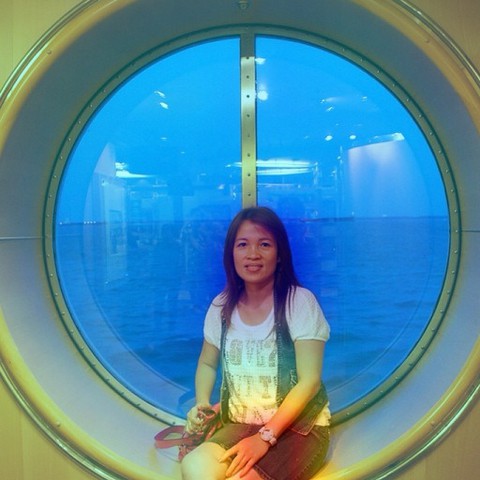} &
\includegraphics[width=2.7cm]{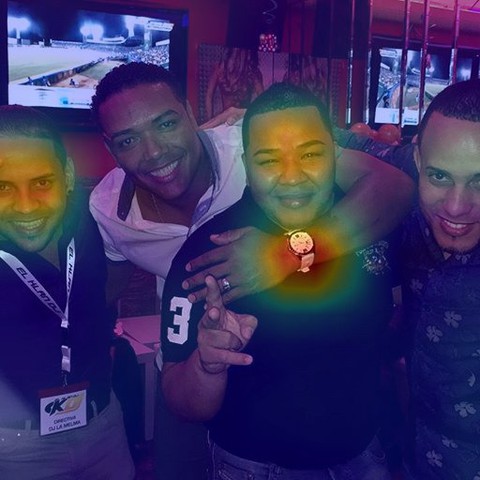} &
\includegraphics[width=2.7cm]{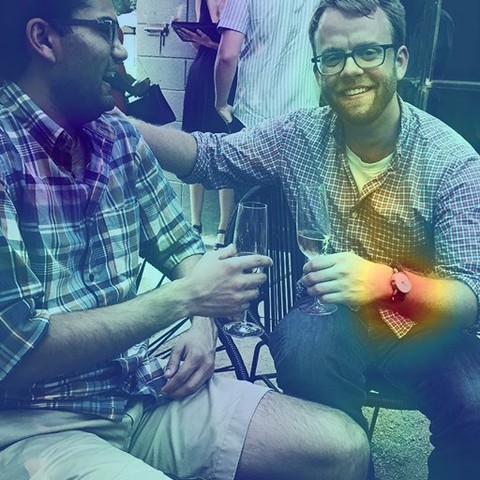} &
\includegraphics[width=2.7cm]{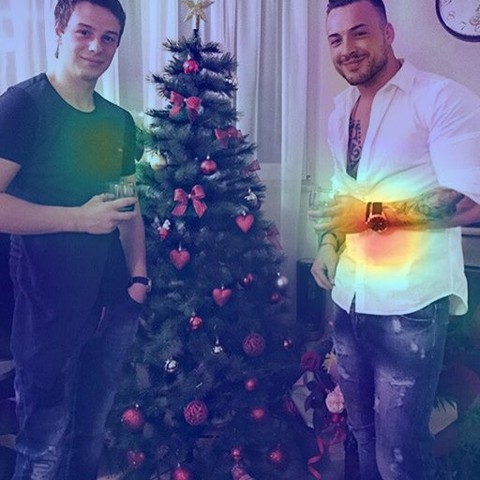} &
\includegraphics[width=2.7cm]{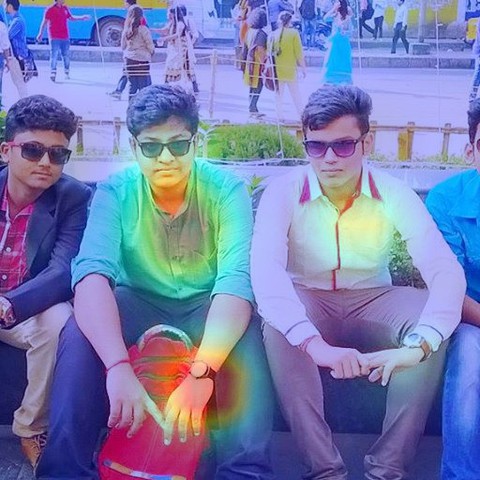} \\
\rotatebox{90}{Shirt collars} &
\includegraphics[width=2.7cm]{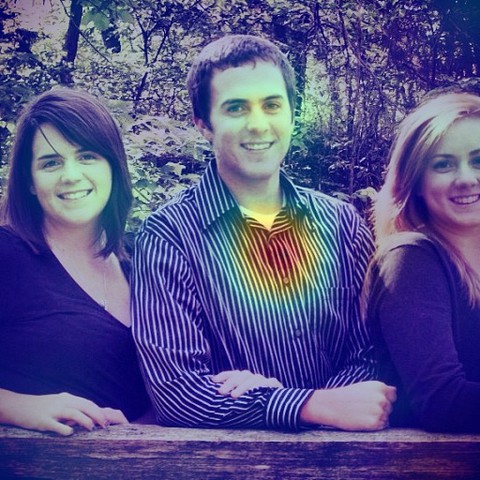} &
\includegraphics[width=2.7cm]{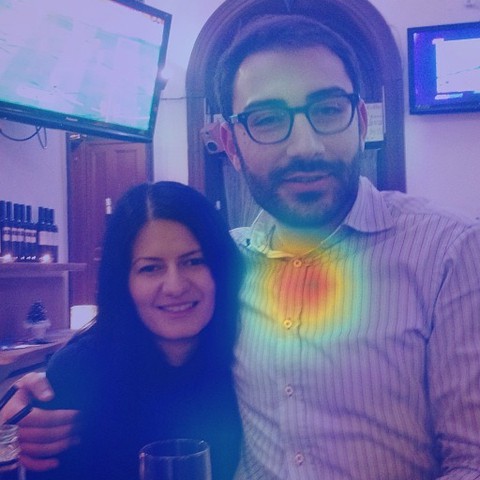} &
\includegraphics[width=2.7cm]{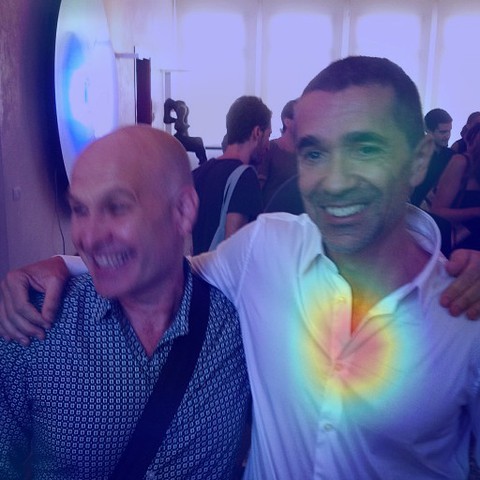} &
\includegraphics[width=2.7cm]{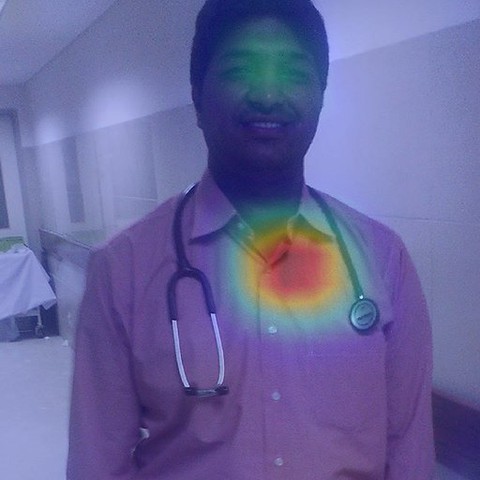} &
\includegraphics[width=2.7cm]{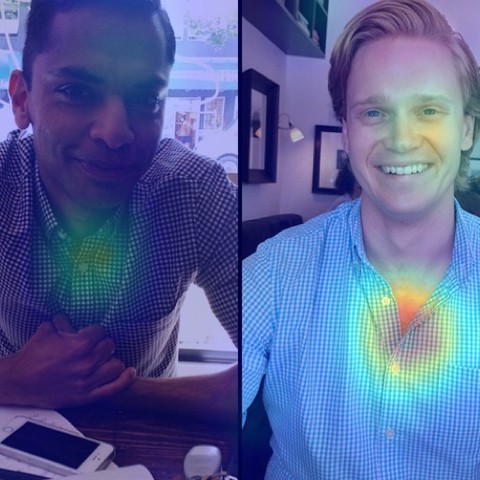} \\
\rotatebox{90}{Shirt pockets} &
\includegraphics[width=2.7cm]{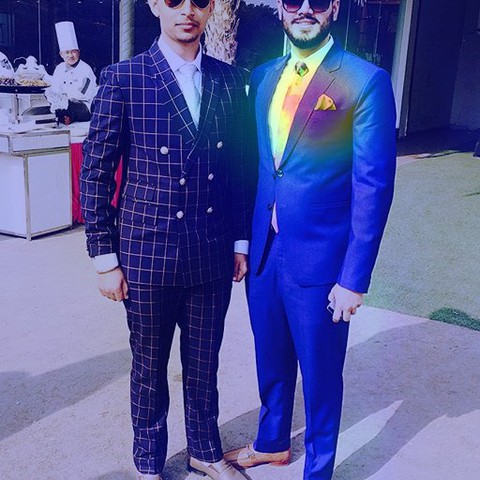} &
\includegraphics[width=2.7cm]{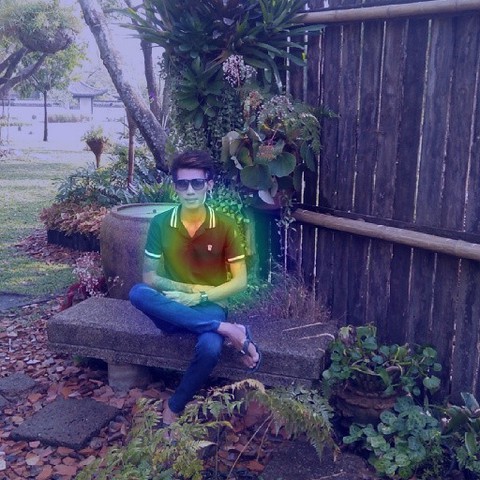} &
\includegraphics[width=2.7cm]{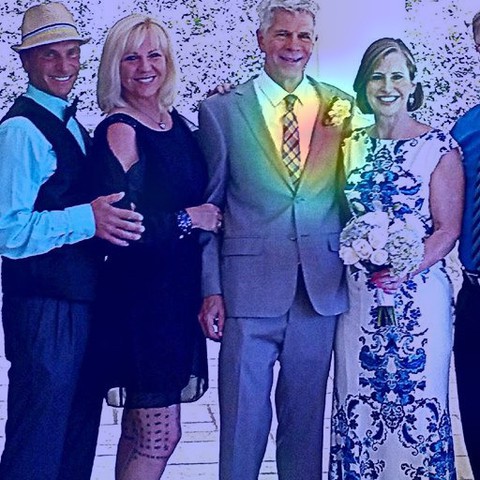} &
\includegraphics[width=2.7cm]{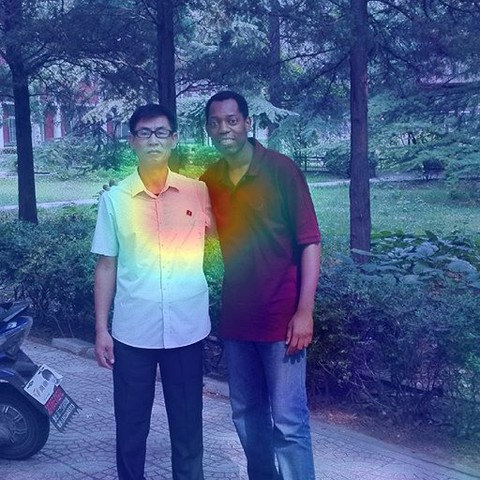} &
\includegraphics[width=2.7cm]{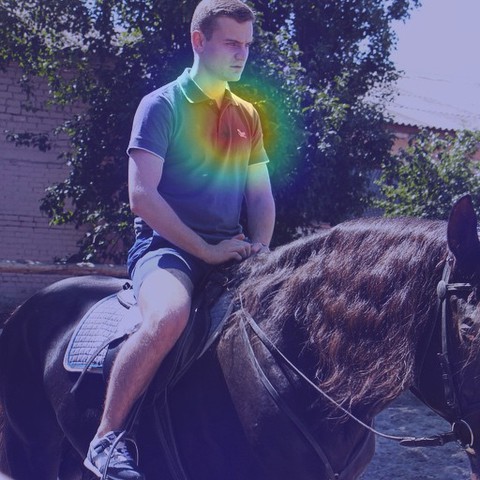} \\
\rotatebox{90}{Faces} &
\includegraphics[width=2.7cm]{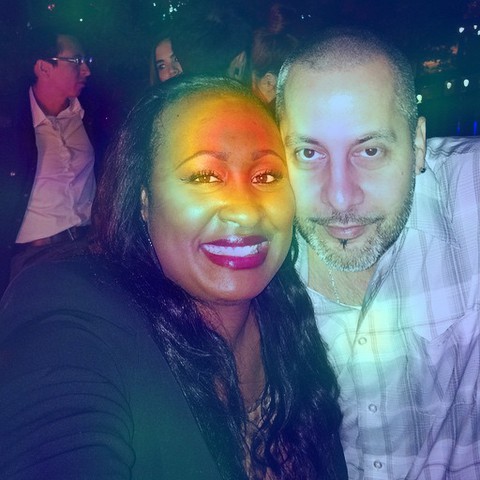} &
\includegraphics[width=2.7cm]{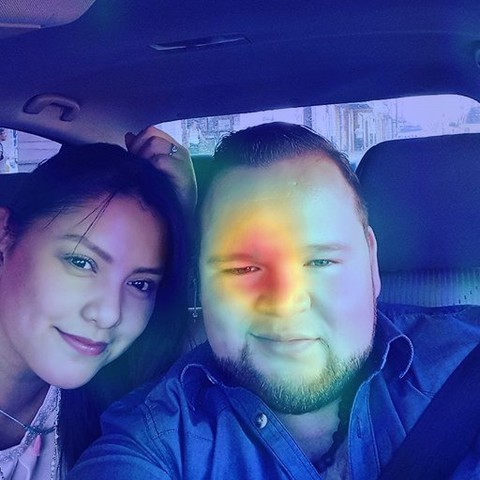} &
\includegraphics[width=2.7cm]{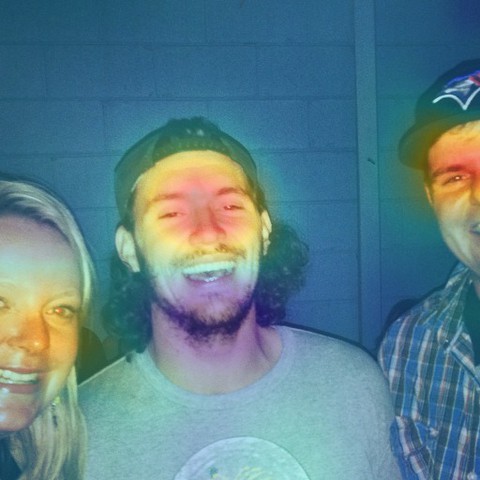} &
\includegraphics[width=2.7cm]{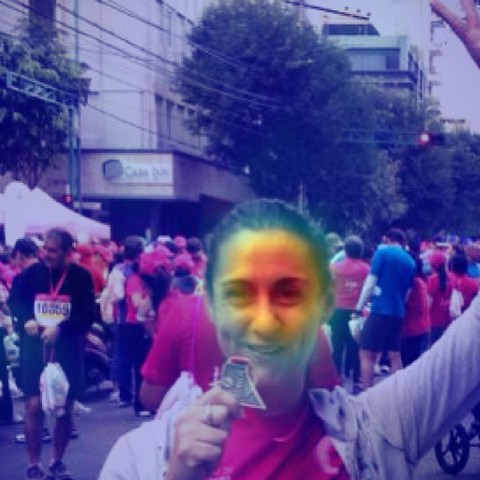} &
\includegraphics[width=2.7cm]{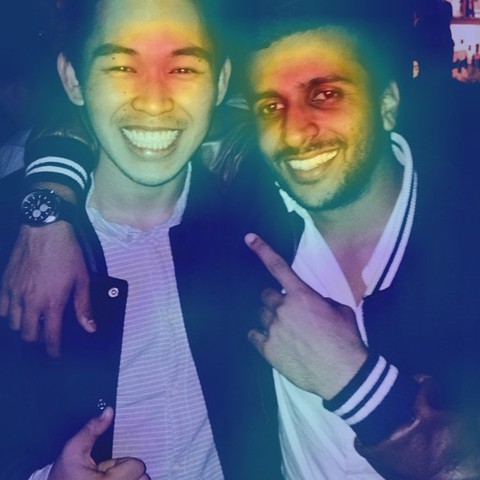} \\
\end{tabular}
}
\caption{\textbf{Chiral clusters discovered in the Instagram dataset.} Each row shows selected images from a single discovered cluster. Each image is shown with its corresponding CAM heatmap superimposed, where red regions are highly correlated with its true chirality. We discover a range of object-level chiral clusters, such as cellphones, watches, and shirts.}\label{fig:instaclusters}
\end{figure*}

\medskip
\noindent \textbf{Non-text cues.} Examining the most confident classifications, we found that many involved text (e.g., on clothing or in the background), and that CAM heatmaps often predominantly focused on text regions. Indeed, text is such a strong signal for chirality that it seems to drown out other signals. 
This yields a useful insight: we may be able to leverage chirality to learn a text detector via self-supervision, for any language (so long as the writing is chiral, which is true for many if not all languages).

However, for the purpose of the current analysis, we wish to discover non-text chiral cues as well. To make it easier to identify 
such cues, we ran an automatic text detector~\cite{EAST} on \Instagram, split it into text and no-text subsets, and then randomly sampled %
the no-text subset to form new training and text set. On the no-text subset, chirality classification accuracy drops from 80\% to 74\%---lower, but still well above chance.

\medskip
\noindent \textbf{Generalization.} 
Perhaps our classifier learns features specific to Instagram images. To test this, Table~\ref{tab:object} (last column) shows the evaluation accuracy of all models (without fine-tuning) on another dataset of Internet photos, a randomly selected subset of photos from Flickr100M~\cite{thomee2016flickr}. Note that there is a significant domain gap between \Instagram and Flickr100M, in that images in our \Instagram dataset all contain people, whereas Flickr100M features more general content (landscapes, macro shots, etc.) in addition al people. While the performance on Flickr100M is naturally lower than on \Instagram, our \Instagram-trained models still perform above chance rates, 
with an accuracy of 55\% (or 59\% if text is considered), suggesting that our learned chiral features can generalize to new distributions of photos.

\subsection{Revealing object-level chiral features}\label{sec:discovering} 

Inspecting the CAM heatmaps derived from our non-text-trained \Instagram model reveals a network that focuses on a coherent set of local regions, such as smart phones and shirt pockets, across different photos. 
To further understand what the network has learned, we develop a way to group the images, as well as their CAM heatmaps, to determine which cues are most common and salient.
Inspired by work on mid-level discriminative patch mining~\cite{Doersch-12,Singh-12,li2015mid,matzen2015bubblenet}, we propose 
a method built upon CAM that we call \emph{chiral feature clustering}, which  automatically groups images based on the similarity of features extracted by the network, in regions deemed salient by CAM.

\medskip
\noindent \textbf{Chiral feature clustering.}
First, we extract the most discriminative local chiral feature from each image to use as input to our clustering stage. To do so, we consider the feature maps that are output from the last convolutional layer of our network. As is typical of CNNs, these features are maps with low spatial resolution, but with high channel dimensionality (e.g., 2048). 

Given an input image, let us denote the output of this last convolutional layer as $\mathbf{f}$, which in our case is a feature map of dimensions $16 \times 16 \times 2048$ ($w \times h \times c$). Let $\mathbf{f}(x,y)$ denote the 2048-D vector at location $(x,y)$ of $\mathbf{f}$. We apply CAM, using the correct chirality label for the image, to obtain a $16 \times 16$ weight activation map $A$. Recall that the higher the value of $A(x,y)$, the higher the contribution of the local region corresponding to $(x,y)$ to the prediction of the correct chirality label. 

We then locate the spatial maxima of $A$, $(x^*,y^*) = \argmax_{(x,y)} A(x,y)$ in each image. These correspond to points deemed maximally salient for the chirality task by the network. We extract $\mathbf{f}(x^*, y^*)$ as a local feature vector describing this maximally chiral region. Running this procedure for each image yields a collection for feature vectors, on which we run $k$-means clustering.

\medskip
\noindent \textbf{Results of chiral feature clustering.}
We apply this clustering procedure to our no-text \Instagram test set, using $k=500$ clusters. We observe that this method is surprisingly effective and identifies a number of intriguing object-level chiral cues in our datasets. We refer to these clusters as \emph{chiral clusters}. Examples of striking high-level chiral clusters are shown in Figure~\ref{fig:instaclusters}, and include phones (e.g., held in a specific way to take photos in a mirror), watches (typically worn on the left hand), shirt collars (shirts with buttoned collared typically button on a consistent side), shirt pockets, pants, and other objects.

Many of these discovered chiral clusters are highly interpretable. However, some clusters are difficult to understand. For instance, in the face cluster shown in the last row of Figure~\ref{fig:instaclusters}, the authors could not find obvious evidence of visual chirality, leading us to suspect that there may be subtle chirality cues in faces. We explore this possibility in Section~\ref{sec:faces}. 
We also observe that some clusters focus on sharp edges in the image, leading us to suspect that some low-level image processing cues are being learned in spite of the ImageNet initialization and random cropping.

\section{Visual chirality in faces}\label{sec:faces}

\begin{figure*}[ht!]
\centering
\resizebox{\textwidth}{!}{
\begin{tabular}{m{0.25cm} m{2.62cm} m{2.62cm} m{2.62cm} m{2.62cm} m{2.62cm}}
\rotatebox{90}{Hair part} &
\includegraphics[width=2.7cm]{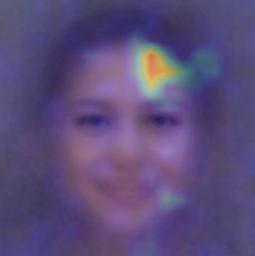} &
\includegraphics[width=2.7cm]{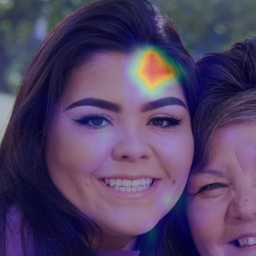} &
\includegraphics[width=2.7cm]{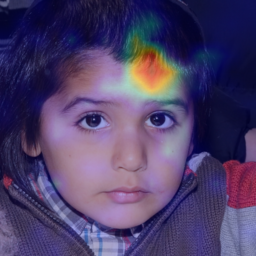} &
\includegraphics[width=2.7cm]{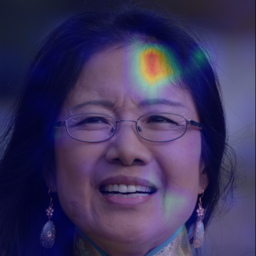} &
\includegraphics[width=2.7cm]{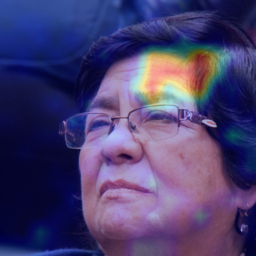} \\
\rotatebox{90}{Eyes} &
\includegraphics[width=2.7cm]{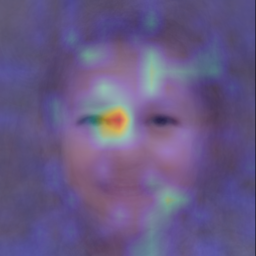} &
\includegraphics[width=2.7cm]{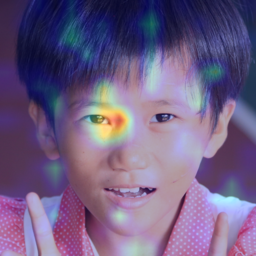} &
\includegraphics[width=2.7cm]{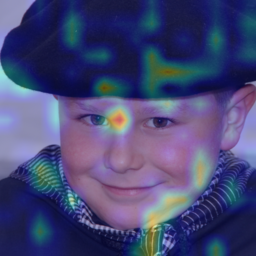} &
\includegraphics[width=2.7cm]{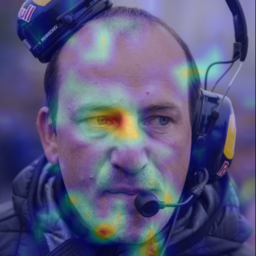} &
\includegraphics[width=2.7cm]{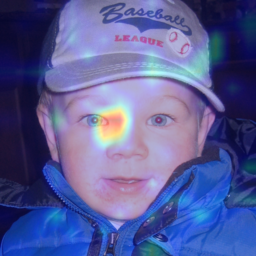} \\
\rotatebox{90}{Beard} &
\includegraphics[width=2.7cm]{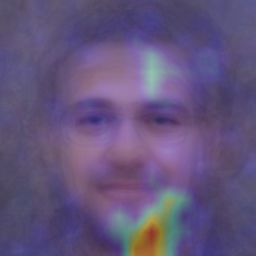} &
\includegraphics[width=2.7cm]{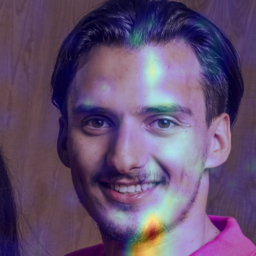} &
\includegraphics[width=2.7cm]{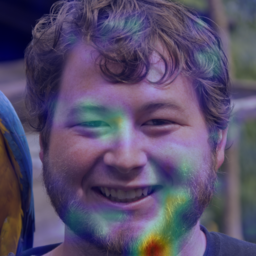} &
\includegraphics[width=2.7cm]{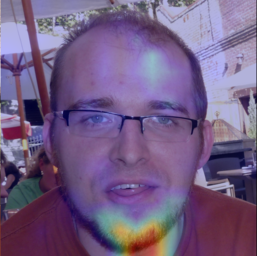} &
\includegraphics[width=2.7cm]{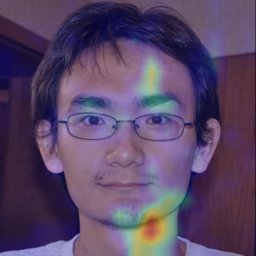} \\
\end{tabular}
}
\caption{\textbf{Chiral clusters found in FFHQ.} It shows 3 chiral clusters of FFHQ dataset. The leftmost image of each row is the average face + CAM heatmap for all non-flipped images inside the each cluster. We also show some random non-flipped examples for each cluster.}\label{fig:faceclusters}
\end{figure*}

Inspired by our results on the Instagram dataset in Section~\ref{sec:highlevel}, we now analyze chirality in face images.
To do so, we use the FFHQ dataset~\cite{karras2018style} as the basis for learning. FFHQ is a recent dataset of 70K high-quality faces introduced in the context of training generative methods. We use 7\% of the images as a test set and the remaining images for training and validation.
We train various models 
on FFHQ, first downsampling images to a resolution of 520$\times$520, then randomly cropping to 512$\times$512.  We train a standard model starting from ImageNet pre-trained features.  This model achieves an accuracy of 81\%, which is a promising indicator that our network can indeed learn to predict the chirality of faces with accuracy significantly better than chance.

However, perhaps there is some bias in FFHQ that leads to spurious chirality signals. For instance, since a face detector is used to create FFHQ, there is the possibility that the detector is biased, e.g., for left-facing faces vs.\  right-facing faces. To test this, we evaluate how well our FFHQ-trained model generalizes to other independent datasets. In particular, we evaluate this model (without fine-tuning) on another dataset, LFW, a standard face dataset~\cite{LFWTech}. We upsample the low-resolution images in LFW to 512$\times$512 to match our input resolution. This yields an accuracy of 60\%---not as high as FFHQ, perhaps due to different distributions of faces, but still significantly better than chance.

To qualitatively explore the chirality cues the model has identified, we show a sample of chiral clusters derived from the FFHQ test set in Figure~\ref{fig:faceclusters}. We can see that the CAM heatmaps in each cluster focus on specific facial regions. 
Based on these cluster, we have identified some intriguing preliminary hypotheses about facial chirality:

\medskip
\noindent \textbf{Hair part}. The first cluster in Figure~\ref{fig:faceclusters} indicates a region around the part of the hair on the left side of the forehead. We conjecture that this could be due to bias in hair part direction. We manually inspected a subset of the FFHQ test set, and found that a majority of people pictured parted their hair from left to right (the ratio is $\sim$2:1 for photos with visible hair part), indicating a bias for asymmetry in hair, possibly due to people preferentially using their dominant right hand to part their hair.

\medskip
\noindent \textbf{Predominant gaze direction}, aka ocular dominance\footnote{\text{https://en.wikipedia.org/wiki/Ocular\_dominance}}. 
The second cluster cluster in Figure~\ref{fig:faceclusters} highlights a region around the corner of the right eye. We conjectured that this may have to do with bias in gaze direction, possibly due to ocular dominance. We use gaze detection software\footnote{https://github.com/shaoanlu/GazeML-keras} to determine
and compare the locations of the pupil in the left and right eyes. We found that indeed more than two thirds of people in portrait photographs gaze more towards the left.

\smallskip
Note that there are also other clusters left to be explained (for example the ``beard'' cluster, which may perhaps be due to males tending to use right hands to shave or groom their beard). Exploring such cues would make for interesting future work and perhaps reveal interesting asymmetries in our world.

\section{Conclusion}
We propose to discover visual chirality in image distributions using a self-supervised learning approach by predicting whether a photo is flipped or not, and by analyzing properties of transformations that yield chirality.
We report various visual chirality cues identified using our tool on a variety of datasets such as Instagram photos and FFHQ face images.
We also find that low-level chiral cues are likely pervasive in images, due to chiralities inherent in standard image processing pipelines.
Our analysis has implications in data augmentation, self-supervised learning, and image forensics.
Our results implies that visual chirality indeed exists in many vision datasets and such properties should be taken into account when developing real-world vision systems.
However, our work suggests that it can also be used as a signal that can be leveraged in interesting new ways. For instance, since text is highly chiral, our work points to interesting future direction in utilizing chirality in a self-supervised way to learn to detect text in images in the wild. We hope that our work will also inspire further investigation into subtle biases imprinted our visual world.

\medskip
\noindent \textbf{Acknowledgements.} This research was supported in part by the generosity of Eric and Wendy Schmidt by recommendation of the Schmidt Futures program.

{\small
\bibliographystyle{main/ieee}
\bibliography{ms}
}

\end{document}


\title{\articletitle}
\author{Zhiqiu Lin\textsuperscript{1} 
\qquad Jin Sun\textsuperscript{1,2} 
\qquad Abe Davis\textsuperscript{1,2} 
\qquad Noah Snavely\textsuperscript{1,2}\\
Cornell University\textsuperscript{1} \qquad Cornell Tech\textsuperscript{2}\\
}

\maketitle

\begin{abstract}
\noindent In this document we examine the effect that an image operation can have on the symmetries of an image distribution. We show how an operation's commutativity with a transformation can be used to predict how it will affect symmetries of a distribution with respect to that transformation. We then use this observation to analyze how operations such as Bayer demosaicing, JPEG compression, and random cropping, affect visual chirality, and show that our analysis accurately predicts the performance of deep networks trained on processed data.

\end{abstract}

\section{Introduction}
\label{sec:theory}
A key goal of our work on visual chirality is to understand how reflection changes what we can learn from image data. We can think of this change as the difference between two distributions: one representing the data, and the other representing its reflection. In our main paper, we examine this difference by training a network to distinguish between samples drawn from each distribution.
However, most image data undergoes extensive processing before it even leaves a camera, and it is easy to imagine how such processing could introduce asymmetry that makes it trivial to distinguish between original images and their reflections. For example, if a camera were to watermark every image with an asymmetric pattern, then nearly any distribution of images it produced would be chiral, even if the content of every image (watermark aside) were perfectly symmetric. 
This leads to an important question: %
when can we attribute visual chirality to the visual world, and when might it instead be a consequence of how we process images? To help answer this, we develop a theory relating the preservation of symmetry in an image distribution to the commutativity of image processing operations with symmetry transformations.

We begin by reviewing different types of symmetry and how they relate to data augmentation and machine learning (Section \ref{sec:definition}). Next, in Section \ref{sec:symmetrypreservation}, we define what it means for an operation to preserve symmetry, and derive various relationships between the commutativity of such operations with a transformation, and whether they preserve symmetries with respect to that transformation. 
Then, based on the theory we developed in Section \ref{sec:symmetrypreservation}, we introduce a simple technique that uses a small number of representative samples from a distribution to quickly estimate whether an imaging operation may introduce visual chirality into that distribution (Section \ref{sec:comres}). In Section \ref{sec:operations} we apply this technique to common digital image processing operations, including Bayer demosaicing and JPEG compression, to analyze the effect that they have on visual chirality and learning. Finally, in Section \ref{sec:glidesymmetry}, we extends our analysis to consider random cropping and show how it can sometimes be used to make chiral operations achiral.

We say that an operation is chiral if it can map an achiral (symmetric) distribution of images to a chiral (asymmetric) one, and say that it is achiral if it preserves symmetry. Some concrete results of our analysis include showing that Bayer demosaicing and JPEG compression are each achiral for certain image sizes and chiral for others, and that when combined they are chiral for all image sizes.
When either is combined with random cropping individually it becomes achiral. Finally, when demosaicing and JPEG compression are both applied in combination with random cropping, the resulting operation remains chiral.

Our theoretical and empirical results altogether suggest that nearly imperceptible chiral traces may be left in photos by non-commutative imaging pipelines, which has implications on self-supervised learning, image forensics, data augmentation, etc.

\section{Symmetry}\label{sec:definition}

We begin by reviewing what it means for a distribution to be symmetric, and for that symmetry to be preserved under different transformations. From this we will derive a relationship between the commutativity of an operation with a transformation, and the preservation of symmetries under that transformation.

\subsection{Terms \& Definitions:} 
We first define the terms of our analysis abstractly and with minimal assumptions to keep our conclusions as general as possible. 
We assume the following are given:
\begin{table*}[]
\begin{center}
\resizebox{\textwidth}{!}{%
\begin{tabular}{m{4.5cm} m{5cm} m{8cm}}
\toprule
\textbf{Term}&\textbf{Definition}&\textbf{Meaning in Learning Applications}\\ \midrule
A distribution \tD{} %
& $\D:\mathbb{R}^n\mapsto\mathbb{R}$ &
The underlying distribution our training data is drawn from for some task.\\
\midrule
A symmetry transformation \tT{}& $\T:\mathbb{R}^n\mapsto\mathbb{R}^n$, is associative and invertible&
E.g., horizontal reflection, or any other associative and invertible transformation to be used for data augmentation. \\
\midrule
A processing transformation \tJ{}&
$\J:\mathbb{R}^n\mapsto\mathbb{R}^n$, 
does \emph{not} have to be invertible&
Some combination of image processing operations, e.g., demosaicing and/or JPEG compression.\\
\midrule
A transformed distribution \tDj{}& %
$\Dj(\imagey) = \sum_{\imagex \in \J^{-1}(\imagey)}\D(\imagex)$
&
The distribution of training data after every element has been transformed by \tJ{}.\\
\bottomrule
\end{tabular}%
}
\end{center}
\caption{Terms and definitions used in derivations.}
\label{tab:symbols}
\end{table*}

\begin{itemize}
    \item A distribution $\D:\mathbb{R}^n\mapsto\mathbb{R}$ over some elements.
    \item A \emph{symmetry transformation} $\T:\mathbb{R}^n\mapsto\mathbb{R}^n$, which we assume to be invertible and associative. 
    \item A second (processing) transformation, $\J:\mathbb{R}^n\mapsto\mathbb{R}^n$, being applied to the domain of \tD{}.
    \item A \emph{transformed distribution} $\Dj:\mathbb{R}^n\mapsto\mathbb{R}$ obtained by applying \tJ{} to the elements of \tD{}:
    \beq
    \Dj(\imagey) = \sum_{\imagex \in \J^{-1}(\imagey)}\D(\imagex).
    \label{eq:dj}
    \eeq
\end{itemize}

\noindent{}These definitions intentionally omit assumptions that hold only in the specific case of analyzing visual chirality; for example, we do not assume that \tT{} is its own inverse, even though this holds for horizontal reflection. However, it is useful to remember how these abstract definitions apply to the concrete case of visual chirality, where \tD{} is a probability distribution over images, \tT{} is horizontal reflection, and \tJ{} is some kind of image processing operation. \tDj{} then describes the distribution associated with drawing images from \tD{} and subsequently applying \tJ{} ( i.e., the distribution of our training data if we apply \tJ{} to every training image).
So if $\imagesetX$ is a dataset of raw images that collectively approximate the distribution \tD{}, and \tJ{} is JPEG compression, then \tDj{} is the distribution approximated by $\J{}(\imagesetX{})$, which we get by applying JPEG compression to every image in \timagesetX{}. The summation in Equation \ref{eq:dj} accounts for the possibility that \tJ{} is non-injective,
in which case $|\J^{-1}(\imagey)|\ge{}1$ (i.e., 
\tJ{} maps multiple distinct inputs to the same output
). This is true, for example, of any lossy compression like JPEG.  In such cases, the probability of a transformed element \timagey{} is a sum over the probabilities associated with all inputs that map to \timagey.
For convenience, a summary of each term and its meaning in the context of visual chirality is also given in Table \ref{tab:symbols}.

\subsection{Symmetry of Elements \& Distributions}
It is important to distinguish what it means for an individual element to be symmetric, and what it means for that element to be symmetric under some distribution \tD{}.
We say that an element \timagex{} is symmetric with respect to a transformation \tT{} if: 
\beq
\imagex=\T\imagex
\label{eq:element_symmetry}
\eeq
while symmetry with respect to \tT{} under some distribution \tD{} is defined by the condition:
\beq
\D(\imagex)=\D(\T\imagex)
\label{eq:basic_symmetry}
\eeq

\noindent{}Which makes the distribution \tD{} itself symmetric if and only if Equation \ref{eq:basic_symmetry} holds for all \timagex{}.

Importantly, Equation \ref{eq:basic_symmetry} can hold even when Equation \ref{eq:element_symmetry} does not,
meaning that asymmetric elements may still be symmetric under the distribution \tD{}. In the context of computer vision, this happens when an image and its reflection are different (i.e., the image itself is not symmetric) but share the same probability under \tD{}.
On the other hand, as equivalence implies equivalence under a distribution, the symmetry of an individual element does imply symmetry under \tD{}, making the symmetry of an element a sufficient but not necessary condition for symmetry under a distribution. This makes any distribution over exclusively symmetric elements trivially symmetric; however, using \tT{} to augment data drawn from such a distribution would not be especially useful, as \tT{} would map every element to itself.

\section{Symmetry Preservation}\label{sec:symmetrypreservation}
In order to reason about asymmetry in visual content we need to understand how symmetry is affected by image processing. In particular, we need to know whether a processing transformation preserves symmetries in the original data. Without this knowledge, we cannot be certain whether asymmetries that we observe in images are properties of visual content, or of how that visual content was processed. Equations \ref{eq:element_symmetry} and \ref{eq:basic_symmetry} describe two distinct types of symmetry; the first relates two elements, while the second relates the images of these elements in \tD{}. If we consider the effect that an operation \tJ{} will have on each type of symmetry, we arrive at two different notions of what it means for symmetry to be preserved. The first describes whether the symmetry of individual elements is preserved.
It is defined by applying Equation \ref{eq:element_symmetry} to both \timagex{} and $\J{}(\imagex)$:
\beq
[\imagex=\T\imagex]\implies[\J(\imagex)=\T\J(\imagex)]
\label{eq:preservation1}
\eeq
The second type of symmetry preservation describes whether the symmetry of a distribution is preserved. It is defined by applying \tJ{} to Equation \ref{eq:basic_symmetry}:
\beq
[\D(\imagex)=\D(\T\imagex)]\implies[\Dj(\imagex)=\Dj(\T\imagex)]
\label{eq:distpreservation}
\eeq
Note that neither of Equations \ref{eq:preservation1} and \ref{eq:distpreservation} implies the other. For example, \tJ{} will trivially preserve element symmetry when applied to a domain that does not contain symmetric elements, but can easily break distribution symmetry. Likewise, we can break element symmetry while preserving distribution symmetry by permuting a uniform distribution of elements such that any symmetric element maps to a non-symmetric element.

\subsection{Commutativity \& Element Symmetry}
\label{sec:elementsym}
Our first type of symmetry preservation describes whether elements that are symmetric with respect to \tT{} remain so after applying \tJ{}. We now show that this holds if and only if \tJ{} commutes with \tT{} when applied to symmetric elements, meaning:
\beq
[\imagex=\T\imagex]\implies[\J(\T\imagex)=\T\J(\imagex)]
\label{eq:symcom}
\eeq
\begin{proposition}
\label{prop:symcom}
\tJ{} will preserve the symmetry of elements with respect to \tT{} (Equation \ref{eq:preservation1}) if and only if \tT{} and \tJ{} commute on symmetric elements (Equation \ref{eq:symcom}).
\begin{proof}
We start by showing that Equation \ref{eq:preservation1} implies Equation \ref{eq:symcom}. As equivalence implies equality under \tJ{}, we have:
\beq
[\imagex=\T\imagex]\implies[\J(\imagex)=\J(\T\imagex)]
\label{eq:preservation1b}
\eeq
And combining the right sides of Equations \ref{eq:preservation1} and \ref{eq:preservation1b} gives us Equation \ref{eq:symcom}. To show the other direction we start by applying \tJ{} to both sides of Equation \ref{eq:element_symmetry} to get the right side of Equation \ref{eq:preservation1b}. From here we use Equation \ref{eq:symcom} to commute \tT{} and \tJ{} and get Equation \ref{eq:preservation1}. This concludes the proof.
\end{proof}
\end{proposition}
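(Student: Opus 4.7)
The plan is to prove the biconditional by noticing that both Equation \ref{eq:preservation1} and Equation \ref{eq:symcom} share the same hypothesis $\imagex = \T\imagex$ and differ only in their conclusions, so the whole task reduces to showing that the two conclusions are equivalent under this hypothesis. The bridge between them is a trivial substitution: since equality of inputs forces equality of outputs under any function, applying \tJ{} to both sides of $\imagex = \T\imagex$ immediately yields $\J(\imagex) = \J(\T\imagex)$. I would state this auxiliary identity first and then use it in both directions of the proof.

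For the forward direction (Equation \ref{eq:preservation1} $\Rightarrow$ Equation \ref{eq:symcom}), I would assume $\imagex = \T\imagex$, invoke the hypothesis to get $\J(\imagex) = \T\J(\imagex)$, and chain this with the auxiliary identity $\J(\imagex) = \J(\T\imagex)$ to conclude $\J(\T\imagex) = \T\J(\imagex)$, which is precisely the conclusion of Equation \ref{eq:symcom}. For the reverse direction, I would again assume $\imagex = \T\imagex$ and apply the auxiliary identity $\J(\imagex) = \J(\T\imagex)$; then invoking Equation \ref{eq:symcom} to rewrite $\J(\T\imagex)$ as $\T\J(\imagex)$ yields $\J(\imagex) = \T\J(\imagex)$, which is the conclusion of Equation \ref{eq:preservation1}.

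There is no real obstacle here: the argument uses no properties of \tJ{} beyond being a well-defined function, and no properties of \tT{} at all beyond what is already stated. The only conceptual subtlety worth emphasizing in the write-up is that Equation \ref{eq:symcom} is only asserted on symmetric elements, which aligns naturally with the shared antecedent of both implications and is what keeps the argument so short.
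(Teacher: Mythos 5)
Your proposal is correct and follows essentially the same route as the paper: both directions hinge on the auxiliary identity $\J(\imagex)=\J(\T\imagex)$ obtained by applying \tJ{} to the shared antecedent $\imagex=\T\imagex$, and then chain it with the respective hypothesis exactly as the paper does. No gaps.
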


Note this also means that, if $\J(\T\imagex)\neq\T\J(\imagex)$, then \tJ{} must break the symmetry of \timagex.

\subsection{Commutativity \& Element Mapping}
Notice that Proposition \ref{prop:symcom} falls short of establishing general commutativity of \tT{} and \tJ{}; it only applies to elements that are symmetric with respect to \tT{}. However, we can derive a stronger relationship related to general commutativity by considering whether \tJ{} preserves the \emph{mapping} of elements defined by \tT{}. To see this, note that \tT{} defines a map from each element $\imagex_a$ to another element $\imagex_b$, where:
\beq
\imagex_b=\T\imagex_a
\label{eq:mapping}
\eeq
We can think of Equation \ref{eq:mapping} as relaxing Equation \ref{eq:element_symmetry} to include asymmetric elements, for which $\imagex_a\neq\imagex_b$.  
We can then define the preservation of this mapping by \tJ{} as:
\beq
[\imagex_b=\T\imagex_a]\implies[\J(\imagex_b)=\T\J(\imagex_a)]
\label{eq:presmapping}
\eeq

From here, we can derive a stronger claim related to general commutativity.
\begin{proposition}
\label{prop:commutativitymap}
\tJ{} preserves the mapping established by \tT{} (Equation \ref{eq:presmapping}) if and only if \tJ{} commutes with \tT{}.
\end{proposition}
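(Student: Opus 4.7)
The plan is to prove both directions by direct unpacking of the definitions, since Equation \ref{eq:presmapping} is almost a restatement of commutativity once we recognize that the hypothesis $\imagex_b=\T\imagex_a$ can be invoked for an arbitrary $\imagex_a$ (because $\T$ is a well-defined function on all of $\mathbb{R}^n$). This is the crucial observation that distinguishes this proposition from Proposition \ref{prop:symcom}: there, the hypothesis $\imagex=\T\imagex$ restricts attention to symmetric elements, whereas here $\imagex_b=\T\imagex_a$ places no restriction on $\imagex_a$ at all, so the implication must hold for every point in the domain.

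For the forward direction, I would fix an arbitrary $\imagex_a\in\mathbb{R}^n$ and \emph{define} $\imagex_b:=\T\imagex_a$. The antecedent of Equation \ref{eq:presmapping} is then satisfied by construction, so the consequent gives $\J(\T\imagex_a)=\J(\imagex_b)=\T\J(\imagex_a)$. Since $\imagex_a$ was arbitrary, $\J\T=\T\J$ as required.

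For the reverse direction, assume $\J$ and $\T$ commute everywhere on $\mathbb{R}^n$. Given any pair with $\imagex_b=\T\imagex_a$, apply $\J$ to both sides (equality is preserved under any function) to obtain $\J(\imagex_b)=\J(\T\imagex_a)$, and then use commutativity on the right-hand side to rewrite this as $\T\J(\imagex_a)$. This establishes Equation \ref{eq:presmapping}.

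There is no real obstacle in this argument; the only subtlety worth flagging in the write-up is the reason why the implication upgrades to an unconditional commutativity statement, namely that every element of $\mathbb{R}^n$ arises as some $\imagex_a$ and that $\T$, being a total function, always produces a legitimate $\imagex_b$. It is also worth noting for clarity that we never use invertibility or associativity of $\T$ here, unlike elsewhere in the paper.
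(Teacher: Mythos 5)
Your proof is correct and follows essentially the same route as the paper: both directions proceed by instantiating the antecedent $\imagex_b=\T\imagex_a$ for arbitrary $\imagex_a$, applying $\J$ to the defining equation, and matching $\J(\T\imagex_a)$ against $\T\J(\imagex_a)$. Your explicit remark that the hypothesis imposes no restriction on $\imagex_a$ (unlike the symmetric-element case of Proposition \ref{prop:symcom}) is a welcome clarification but does not change the substance of the argument.
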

\begin{proof}
We start by showing that Equation~\ref{eq:presmapping} implies commutativity. As equivalence implies equality under \tJ{}, we have:
\beq
[\imagex_b=\T\imagex_a]\implies[\J(\imagex_b)=\J(\T\imagex_a)]
\label{eq:presmapb}
\eeq
And combining the right sides of Equations \ref{eq:presmapping} and \ref{eq:presmapb} gives us $\T{}\J(\imagex_a)=\J(\T\imagex_a)$. To show that commutativity implies Equation \ref{eq:presmapping}, we start by applying \tJ{} to both sides of Equation \ref{eq:mapping} to get the right side of Equation \ref{eq:presmapb}
\beq
[\imagex_b=\T\imagex_a]\implies[\J(\imagex_b)=\J(\T\imagex_a)]
\eeq
From here we commute \tT{} and \tJ{} on the right side to get Equation \ref{eq:presmapping}. This concludes the proof.
\end{proof}
From this we can also conclude that if \tJ{} does not commute with \tT{} then there must be some pair of elements $\imagex_a, \imagex_b$ such that Equation \ref{eq:presmapping} does not hold.

\subsection{The Symmetry of Distributions}
\label{sec:grouptheory}

We have shown that commutativity implies the preservation of element symmetry. 
Now we show that it also implies the preservation of distribution symmetry. This is a bit more complicated than the element case. Our approach, based on group theory, is to show
that when \tJ{} commutes with \tT{}, \tJ{} maps between disjoint cyclic groups generated by \tT{}.

\begin{proposition}
If $\J$ commutes with $\T$ and a distribution \tD is symmetric with respect to \tT, then the transformed distribution \tDj{} will also be symmetric with respect to \tT.
\label{prop:commutativity}
\end{proposition}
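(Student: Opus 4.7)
The plan is to unpack Equation \ref{eq:dj} on both sides of the desired equality $\Dj(\imagey)=\Dj(\T\imagey)$ and exhibit a term-by-term matching between the two sums, using $\T$ itself as the matching map. So the target identity becomes
\beq
\sum_{\imagex \in \J^{-1}(\imagey)} \D(\imagex) \;=\; \sum_{\imagex' \in \J^{-1}(\T\imagey)} \D(\imagex'),
\eeq
and the strategy is to reindex the right-hand sum by $\imagex'=\T\imagex$, then apply the assumed symmetry $\D(\T\imagex)=\D(\imagex)$ to finish.

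The crux is therefore the claim that $\T$ restricts to a bijection $\J^{-1}(\imagey)\to\J^{-1}(\T\imagey)$. For the forward direction, I would take $\imagex\in\J^{-1}(\imagey)$, so $\J(\imagex)=\imagey$, and apply $\T$ to both sides, obtaining $\T\J(\imagex)=\T\imagey$; commutativity of $\J$ and $\T$ rewrites the left side as $\J(\T\imagex)$, which places $\T\imagex$ in $\J^{-1}(\T\imagey)$. Injectivity of the restricted map is immediate from invertibility of $\T$, and surjectivity follows by running the argument backwards: given $\imagex'\in\J^{-1}(\T\imagey)$, set $\imagex=\T^{-1}\imagex'$; then $\J(\T\imagex)=\T\imagey$, commute to get $\T\J(\imagex)=\T\imagey$, and cancel $\T$ using invertibility to conclude $\imagex\in\J^{-1}(\imagey)$.

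Once the bijection is in hand, the reindexing step is mechanical: the right-hand sum becomes $\sum_{\imagex\in\J^{-1}(\imagey)} \D(\T\imagex)$, and the hypothesis that $\D$ is $\T$-symmetric (Equation \ref{eq:basic_symmetry} applied pointwise) replaces each $\D(\T\imagex)$ by $\D(\imagex)$, matching the left-hand sum and yielding $\Dj(\imagey)=\Dj(\T\imagey)$ for every $\imagey$, which is the symmetry condition for $\Dj$.

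The only real subtlety — and the place where I expect to linger — is the non-injectivity of $\J$. Because $\Dj$ is defined as a sum over possibly many preimages, one has to be careful that $\T$ genuinely permutes the preimage set and does not, say, collapse or duplicate elements; invertibility of $\T$ is what prevents this, so I would flag that assumption explicitly. I note that this argument does not actually require the cyclic-group machinery mentioned in the prose preceding the proposition: a direct change-of-variables suffices, and the group-theoretic language only becomes necessary when $\T$ is not its own inverse and one wants to track full orbits, which is not needed here.
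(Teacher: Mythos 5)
Your proof is correct, but it takes a genuinely different route from the paper. The paper's argument is group-theoretic: it partitions the domain into disjoint cyclic subgroups $\gseti$ generated by \tT{}, uses commutativity to show $\J\gseti=\gset{\J\imagexi}$, argues that $\J$ restricted to each such orbit is a homomorphism, and invokes the first isomorphism theorem to conclude that equal numbers of equal-probability elements land on each element of the image orbit. You instead work directly with the defining sum $\Dj(\imagey)=\sum_{\imagex\in\J^{-1}(\imagey)}\D(\imagex)$ and show that \tT{} restricts to a bijection $\J^{-1}(\imagey)\to\J^{-1}(\T\imagey)$, after which a change of variables and the hypothesis $\D(\imagex)=\D(\T\imagex)$ finish the job. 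Your version is more elementary and, arguably, tighter: the paper's orbit ``groups'' carry a somewhat informal group structure (the operation $\T^a\imagexi\gropr\T^b\imagexi=\T^{a+b}\imagexi$ needs care when orbits are finite), and the appeal to the first isomorphism theorem for a map between orbits is looser than your explicit fiber bijection, which handles non-injectivity of \tJ{} head-on. What the paper's heavier machinery buys is not the proposition itself but the subsequent development: the decomposition of \tDj{} into indicator distributions over orbits (Equation \ref{eq:djfirsthomomorphism}) is reused in Section \ref{sec:tgroups} to define permuted commutativity and, later, glide commutativity, so the orbit/homomorphism viewpoint earns its keep downstream even though, as you correctly observe, it is not needed to prove this statement.
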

\begin{proof}
We first show that \tJ{} defines a mapping between disjoint cyclic subgroups. We then show that this map is a homomorphism, which we use to relate $\Dj(\imagex)$ to $\Dj(\T(\imagex))$.

Since \tT{} is associative and invertible, we can use it to partition our domain into disjoint cyclic subgroups \tgset{\imagexi} generated by \tT{}:
\beq
\gset{\imagexi}=\{...,\T^{-1}\imagexi,\imagexi,\T\imagexi,\T^2\imagexi,\T^3\imagexi,...\}
\eeq
where the identity element $\imagexi$ of each group can be chosen as any arbitrary element within the group. 
We refer to the set of such group \textit{generators} as $\mathcal{G}_{\T}$. 
The group operation $\gropr$ can be thought of as a permutation of each specific cyclic group relative to its identity element:
\beq
\T^a\imagexi\gropr\T^b\imagexi = \T^{a+b}\imagexi
\eeq
As each such subgroup shares the same group operation and is closed under that operation, any two $\gset{\imagexi}$ must either be equivalent or disjoint. The order $|\gset{\imagexi}|$ of each subgroup depends on the symmetries of $\imagexi$ with respect to \tT{}. For example, if \tT{} is simple reflection about a particular axis then $|\gset{\imagex_i}|=1$ for images $\imagex_i$ that are symmetric about that axis, and $|\gset{\imagex_i}|=2$ for images that are asymmetric about that axis.

Now consider how \tJ{} transforms each of the subgroups $\gset{\imagex_i}$:
\beq
\J\gseti=\{...\J\T^{-1}\imagexi,\J\imagexi,\J\T\imagexi,\J\T^2\imagexi,...\}
\label{eq:jsymset1}
\eeq
If \tJ{} commutes with \tT{}, we can rewrite the above as
\beq
\J\gseti{}=\{...\T^{-1}\J\imagexi,\J\imagexi,\T\J\imagexi,\T^2\J\imagexi,...\}
\eeq
giving us
\beq
\J\gseti{}=\gset{\J\imagexi}
\label{eq:gtog}
\eeq
This shows that \tJ{} maps cyclic subgroups generated by \tT{} to cyclic subgroups that can be generated by \tT{}.

Symmetry with respect to \tT{} can be restated as the condition that all elements within common cyclic subgroups generated by \tT{} share the same probability. 
In other words, for each cyclic subgroup $\gseti{}$, all elements of the subgroup have the same probability under $\D$, i.e., all elements have probability $\D(\imagexi)$.
It is therefore sufficient for us to show that the map $\J:\gseti\mapsto\gset{\J\imagexi}$ is a homomorphism, as the first isomorphism theorem ensures the same number of equal-probability elements from $\gseti{}$ will map to each element of $\gset{\J\imagexi}$.

Recall that a homomorphism $h:G\mapsto H$ is defined by the relation $h(u\gropr v)=h(u)\gropr h(v)$. It is simple to show that this holds for \tJ{} and our cyclic subgroups when \tJ{} commutes with \tT{}:
\beq
\begin{split}
  \J(\T^{a}\imagexi\cdot\T^{b}\imagexi)
 & = \J(\T^{a+b}\imagexi)\\
 & = \T^{a+b}\J(\imagexi)\\
 & = \T^a\J(\imagexi)\cdot\T^b\J(\imagexi)\\
 & = \J(\T^a\imagexi)\cdot\J(\T^b\imagexi)\\
\end{split}
\eeq
This is sufficient to prove our proposition.
For completeness, we also reformulate \tDj{} in terms of the cyclic subgroups $\gset{\J\imagex_i}$. We will use the notation 
$\mathbf{1}_{\gset{\imagex_i}}$ 
to denote an indicator distribution that maps every element of $\gset{\imagex_i}$ to 1, and every other element to 0. Note that any distribution we can represent as the weighted sum of $\mathbf{1}_{\gset{\imagex_i}}$ must preserve symmetry with respect to \tT{}. We can express \tD as:
\beq
\D=\sum_{\imagex_i\in \generators}\D(\imagexi)\cdot \mathbf{1}_{\gset{\imagex_i}}
\label{eq:dsumofcg}
\eeq
Now, using the first isomorphism theorem to account for the case where \tJ{} is non-injective, we can combine Equation \ref{eq:dj} and \ref{eq:dsumofcg} to write \tDj{} as
\beq
\Dj = \sum_{i}(\D(\imagexi)|\mathrm{ker}\ {\J\imagexi}|)\cdot \mathbf{1}_{\gset{\J\imagexi}}
\label{eq:djfirsthomomorphism}
\eeq
where $\mathrm{ker}\ {\J\imagexi}$ is the kernel of $\J:\gseti\mapsto\gset{\J\imagexi}$.\footnote{Recall that the \textit{kernel} of a homomorphism $\J$, $\mathrm{ker}\ \J$, is the subset of elements that $\J$ maps to the identity element.} This concludes our proof.
\end{proof}

\subsection{Permuted Commutativity}
\label{sec:tgroups}
From Proposition \ref{prop:symcom} we can conclude that if \tJ{} does \emph{not} commute with \tT{} when applied to some symmetric element \timagex{} then $\J{}(\imagex)$ will not be symmetric with respect to \tT{} (this is also simple to prove independently). However, our proof of Proposition \ref{prop:commutativity} is not bi-directional; we only show that commutativity implies distribution symmetry will be preserved. What, then, can we conclude about operations that do not commute with \tT{}? 

The first thing to note is that non-commutativity does \emph{not} imply that distribution symmetry will be broken. There are various ways for symmetry to be preserved even when \tJ{} and \tT{} do not commute, but here we consider a case where groups of operations that do not commute with \tT{} individually combine to preserve distribution symmetry. For instance, imagine that given a training set of images drawn from a distribution $\D$, and that we generate a new training set by applying multiple random crops to each original image---we can think of each different crop offset $j$ as a different transformation $\J_i$ applied to the original distribution, and the accumulation of all random crops to reflect a new, accumulated distribution. As such, this accumulation of transformed images is particularly relevant to computer vision, as it will help us explain an effect that random cropping can have on bias introduced by data augmentation.

We proved Proposition \ref{prop:commutativity} by showing that \tJ{} formed a homomorphism between cyclic subgroups $\gseti{}$ and $\gset{\J\imagexi}$. We now consider the case where \tDj{} is the sum of multiple such homomorphisms $\Jj{j}$, as would result from accumulating the results of multiple transformations $\Jj{j}$:
\beq
\Dj(\imagex)=\sum_{j}\sum_{\imagex_{i}:\Jj{j}\imagex_i=\imagex}\D(\imagex_i)
\label{eq:djgdef}
\eeq
In this case, the sum of symmetric distributions is a symmetric distribution, which tells us that \tDj{} will still be symmetric. Now note that by permuting the elements on the right side of Equation \ref{eq:djgdef}, we can define a new set of transformations that sum to the same \tDj{}, ensuring that symmetry remains preserved.

\begin{figure}[th]
\begin{centering}
\includegraphics[width=0.45\textwidth]{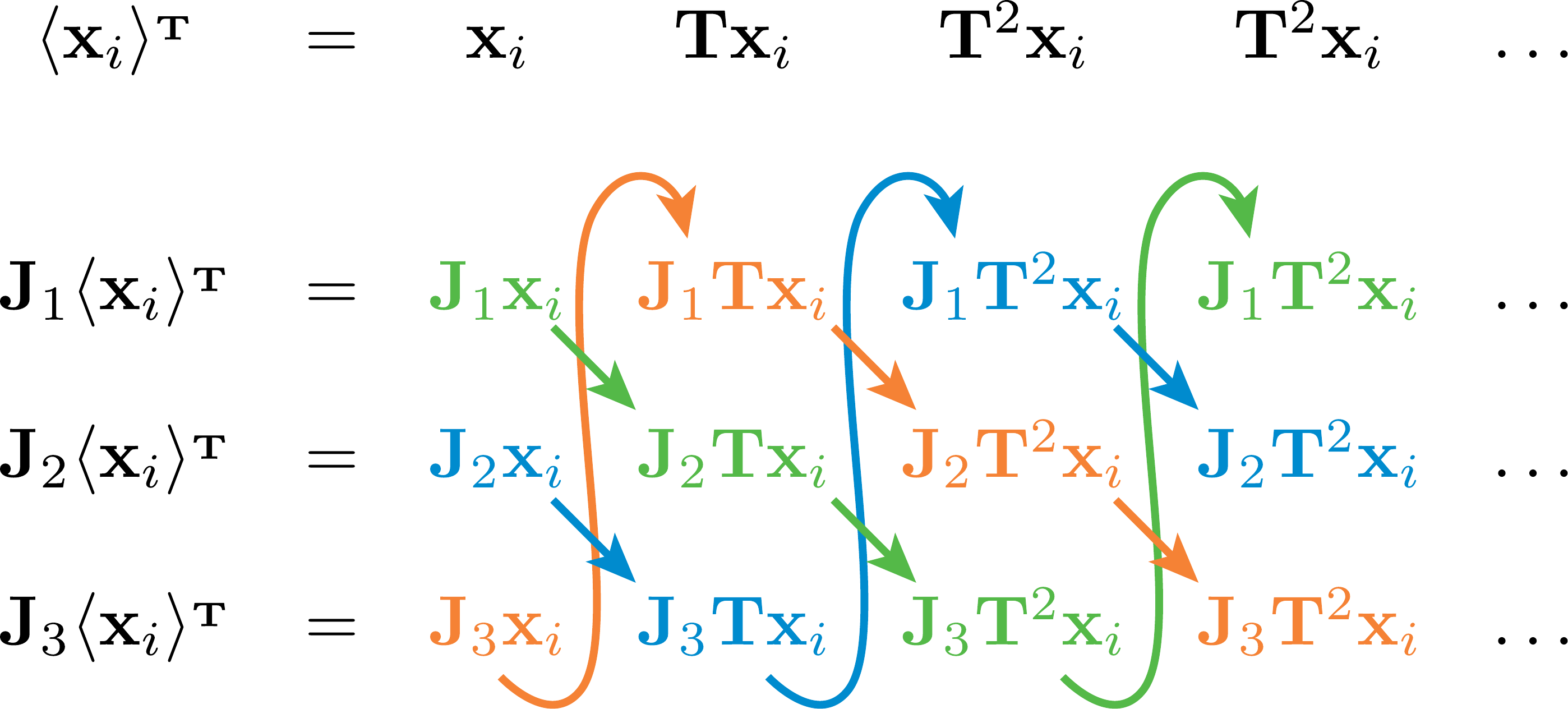}
\caption{\textbf{Permuted Commutativity.} We can permute the elements on the right side of Equation \ref{eq:djgdef} to define a set of processing transformations $\Jj{j}$ that may not commute with \tT, but in aggregate still sum to a symmetric distribution. These $\Jj{j}$ are characterized by a permuted commutativity relationship. In this example, that relationship is given by $\T\Jj{a}=\Jj{b}\T$, where $a=k\pmod{3}$ and $b=k+1\pmod{3}$.}\label{fig:glidecom}
\end{centering}
\end{figure}
This means that if we can find a permutation of $\Jj{j}$ such that the aggregated transformed distribution still maintain symmetric when $\T\Jj{a}=\Jj{b}\T$ for $a$ and $b$ are indices of a found permutation (Figure \ref{fig:glidecom}). This notion of permuted commutativity will lead to a definition of \textit{glide commutativity} in section \ref{sec:glidesymmetry} concerning groups related by translation in 2D image plane.

\section{Commutative Residuals}\label{sec:comres}
Propositions~\ref{prop:commutativitymap} and \ref{prop:commutativity} establish a connection between the commutativity of two operations (a processing operation and a symmetry transformation), and the preservation of two kinds of symmetries. Notably, commutativity guarantees that the symmetry of a distribution under a transformation is preserved. How do we apply this finding in practice on a specific distribution of images and a specific processing operation?

It can be difficult to model complex processing operations like JPEG compression and Bayer demosaicing analytically, and it may be the case that such operations commute with a transformation when applied to certain inputs, and not when applied to others. Furthermore, the impact of non-commutativity is not binary: if we think of the asymmetries introduced by an operation as some signal indicating, for example, whether an image has been flipped, then it is useful to consider the magnitude of that signal relative to variations in the distribution that contains it. These concerns lead us to derive a numerical measure of commutativity that we can evaluate on representative samples of a distribution to gauge the strength of assymmetries introduced by an operation. 
We define $\comresi{}(\imagex)$, the \emph{commutative residual image} of operation \tJ{} with respect to transformation \tT{} on the image \timagex{}, as follows:
\beq
\comresi(\imagex)= \J(\T(\imagex))-\T(\J{}(\imagex))
\label{eq:commutativeresidual}
\eeq
We can get a rough measure of the commutativity between an imaging processing step and a transformation on some representative samples \timagex{} by looking at the value of $|\comresi(\imagex)|$, which we summarize by its average across all pixels, $\comres{}(\imagex)$. We refer to $\comres{}(\imagex)$ as a \textit{commutative residual}. A commutative residual of 0 on a particular image \timagex{} means that $\T$ and $\J$ commute for that image, and a non-zero commutative residuals means that they do not commute for that image. As the derivations in Section~\ref{sec:symmetrypreservation} show, if the commutative residual is 0 for all elements of a distribution (i.e., the processing operation commutes with $\T$), the symmetry will be preserved. If not, symmetries may be broken.

\begin{figure*}[ht]
\begin{centering}
\includegraphics[width=0.9\textwidth]{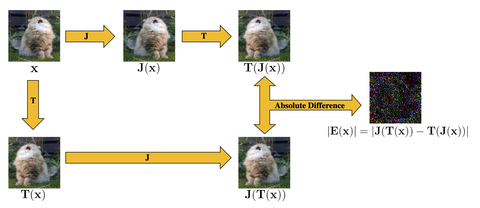}
\caption{\textbf{Example commutative residual image:} This figure illustrates the application of the commutative residual method to a natural image. Here \tT{} is the horizontal reflection operation, and \tJ{} is the composition of Bayer demosaicing and JPEG compression. The image used above has a width of 100px. For better visualization of the imperceptible differences shown in the residual image, we scale the resulting residual by a factor of 10. Consistent with the results in Figure~\ref{fig:commutativity_vs_width}, the residual image is not zero (which would be all black), i.e., the commutative residual is non-zero. }\label{fig:commutativity_residual_example}
\end{centering}
\end{figure*}

\paragraph{Commutative Residuals for Mirror Reflections.}
An alternative intuition of commutative residuals can be arrived at in the case where \tT{} is its own inverse, as is true of mirror reflections. Consider the effect of $\J{}$ on a distribution represented by a dataset with two elements, $\D=\{\imagex, \T(\imagex)\}$. 
This simple distribution is trivially symmetric, since $\D$ is closed under $\T$.

But what happens when we apply \tJ{}? \tD{} becomes $\Dj{}=\{\J(\imagex), \J(\T(\imagex))\}$, and we can measure the asymmetry of this new distribution by taking the difference between one element and the reflection of the other:
\beq
\J(\T(\imagex))-\T(\J(\imagex))
\eeq
which is precisely how we define the commutative residual image above.
Figure~\ref{fig:commutativity_residual_example} shows an example computation of a commutative residual image when $\T$ is image flipping and $\J$ is the composition of Bayer demosaicing and JPEG compression.

\subsection{Evaluating the Chirality of Operations}
We propose two methods to evaluate the chirality introduced to an originally achiral distribution \tD{} by an operation \tJ{}. The first approach, based on the theory we have derived about commutativity, is to evaluate the commutative residual with respect to \tJ{} on a small representative set of sample images. The second method, as described in the main paper in the context of analyzing real image datasets, is to train a neural network to empirically distinguish between flipped and unflipped images sampled from a much larger, symmetric dataset after transforming every image in that dataset by \tJ{}. 
Since we are interested in demonstrating the possibility of introducing chirality through low-level imaging operators, we study image distributions that are originally symmetric to ensure that any learned chirality cues can be attributed solely to the effect of \tJ{}.

\begin{figure}[t]
\begin{centering}
\includegraphics[width=0.3\textwidth]{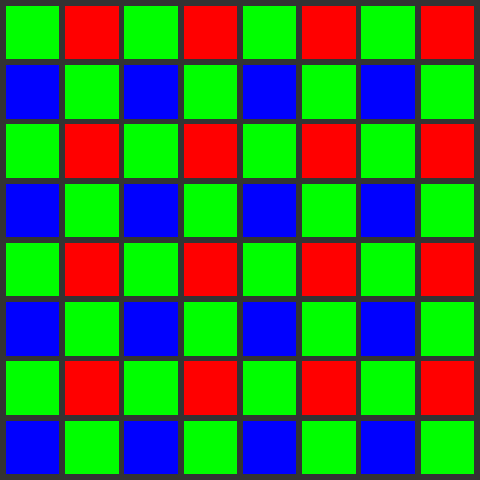}
\caption{\textbf{Example 8$\times$8 Bayer pattern mosaic}: A typical Bayer filter mosaic consists of tiled 2$\times$2 blocks of pixels with two green filters and one red and one blue filter. Note that a even-sized Bayer filter, like the one pictured, is asymmetric (mirror flipped version is not equal to itself), while an odd-sized version of this filter pattern would be symmetric.}\label{fig:bayer}
\end{centering}
\end{figure}

\begin{figure*}[ht]
\begin{centering}
\includegraphics[width=0.9\textwidth]{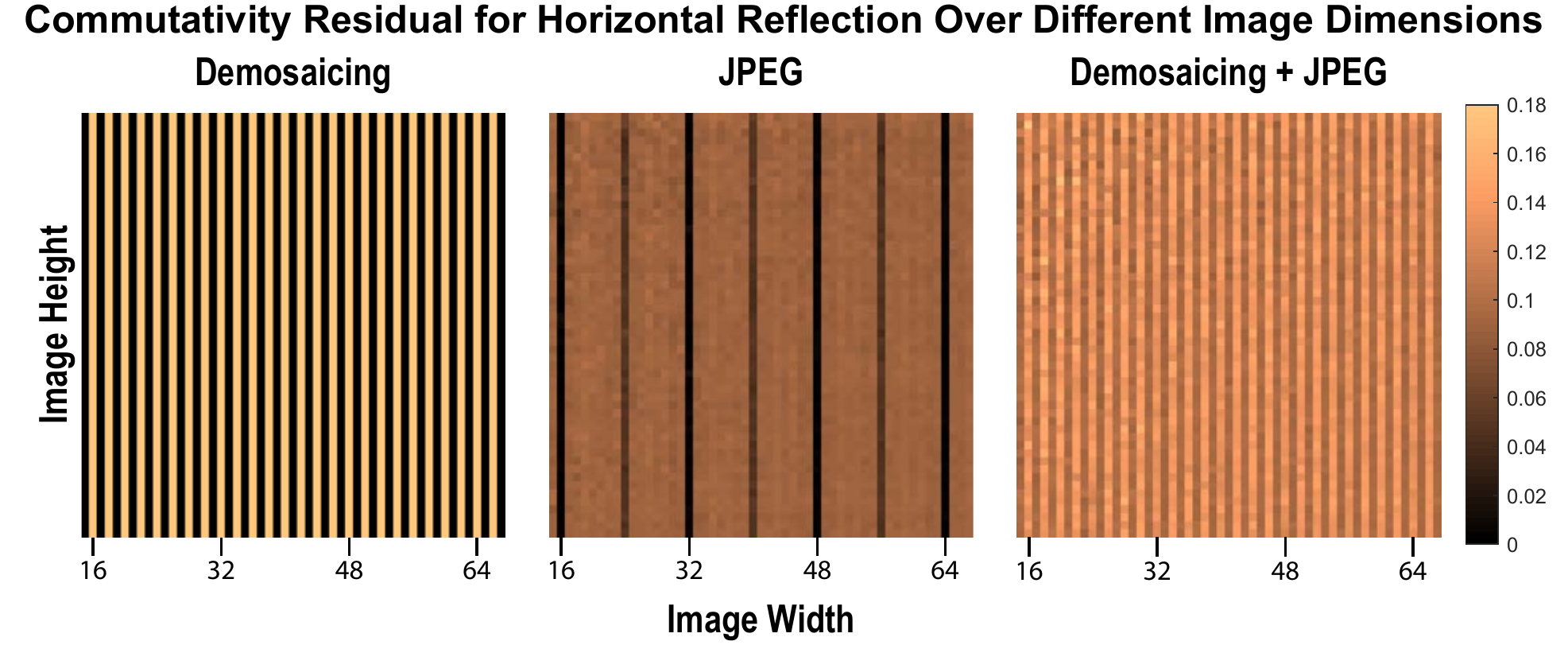}
\caption{\textbf{Commutativity residuals for demosaicing (left), JPEG compression (middle) and their composition (right)}: Each image shows how commutativity residual, measured in absolute average percent error per pixel, varies with different image sizes. For integers $n$ we see commutativity in demosaicing at image widths of $2n-1$ (i.e., odd widths), and in JPEG compression at widths of $16n$. We do not see commutativity when both are applied.}\label{fig:commutativity_vs_width}
\end{centering}
\end{figure*}

\section{Analysis of Demosaicing \& JPEG Compression 
}
\label{sec:operations}
With the theoretical tools derived in previous sections, we evaluate two standard imaging processes: Bayer demosaicing and JPEG compression. We analyze when these two operations (and their composition) will preserve existing symmetries in a distribution of images, and when they may break them. 
In real camera systems, Bayer demosaicing and JPEG compression are typically two operations in a much larger image signal processing pipeline. We analyze these two operations specifically because (a) they are ubiquitous and implemented in most cameras, and (b) they have interesting symmetry properties, as we will show below.
We begin with a brief summary of these two operations.

\medskip
\noindent \textbf{Bayer filters and demosaicing.}
Many modern digital cameras (including cellphone cameras) capture color by means of a square grid of colored filters that lies atop of the grid of photosensors in the camera. An 8$\times$8 example of such a color filter grid, known as a Bayer filter mosaic, is shown in Figure~\ref{fig:bayer}. In such cameras, each pixel's sensor measures intensity for a single color channel (red, green, or blue), and so to produce a full color image at full resolution, we must interpolate each color channel such that each pixel ultimately has an R, G, and B value. This interpolation process is known as \emph{demosaicing}. 
For our analysis we assume, as is typical, that a Bayer filter mosaic pattern consists of a tiled 2$\times$2 element (GRBG in the case of Figure~\ref{fig:bayer}) and we consider the demosaicing method of Malvar~\cite{hql}.

The 8$\times$8 Bayer filter mosaic in Figure~\ref{fig:bayer} has interesting symmetry properties. The 8$\times$8 pattern as a whole is asymmetric---flipping it horizontally will result in a red pixel in the upper-left corner, rather than a green pixel. The same is true for any even-sized Bayer filter mosaic. However, from the perspective of the center of any pixel, the pattern is locally symmetric. Moreover, if we imagine a 9$\times$9 version of this mosaic (or indeed any odd-sized pattern), that mosaic would be symmetric.

\medskip
\noindent \textbf{JPEG compression.} 
JPEG is one of the most common (lossy) image compression schemes. There are two main ways that JPEG compresses image data. First, it converts images into the $\mathrm{Y'C_{b}C_{r}}$ colorspace and downsamples the chroma channels ($\mathrm{C_b}$ and $\mathrm{C_r}$), typically by a factor of two. Then it splits each channel into a grid of $8\times8$ pixel blocks and computes the discrete cosine transform (DCT) of each block. In the luminance ($\mathrm{Y'}$) channel, each block covers an 8$\times$8 pixel region of the original image, while for the chroma channels, each block corresponds to a 16$\times$16 pixel region in the original image, due to the 2$\times$ downsampling. Finally, the DCT of each block is strategically quantized to further compress the data at low perceptual cost.

For the purposes of our analysis, one noteworthy aspect of JPEG compression is that for images with dimensions that are not a multiple of 16, there will be boundary blocks that do not have a full 8$\times$8 complement of pixels. These are handled specially by the JPEG algorithm, which can lead to breaking of symmetry for such images because the special boundary blocks are always at the right (and bottom) edges of the image, never at the left (and top) edges.

\subsection{Commutative Residuals and Image Size}\label{sec:nocropsize}
As an initial experiment, we generate a completely random image with random dimensions (i.e., choosing the width and height uniformly at random from some uniform distribution, and then selecting each value for each color channel at random from the range $[0, 255]$).
Then we compute commutative residuals under the operations of (1) Bayer demosaicing (i.e., first synthetically generating a Bayer mosaic, then demosaicing it), (2) JPEG compression, and (3) the composition of these operations.

If we actually perform this experiment for randomly sized images, then under demosaicing, commutative residuals are nonzero about half of the time, and under JPEG compression, they are  nonzero over 90\% of the time. But if we sample over different image sizes more systematically, a pattern begins to emerge. 

Figure~\ref{fig:commutativity_vs_width} visualizes commutative residuals for random noise images as a function of image width and height for the three operations described above. We can see that demosaicing appears to commute with image flipping (and therefore preserve symmetries) for images with odd widths, while JPEG compression appears to preserve symmetries for image with widths that are divisible by 16. 
Finally and most notably, commutativity never seems to hold for the composition of demosaicing and JPEG compression for any width. We can explain this result by considering the geometry of Bayer patterns and JPEG block grids. Bayer patterns (Figure~\ref{fig:bayer}) have horizontal symmetry when reflected about any line centered on a pixel column, while the JPEG block grid, which consists of 8$\times$8 blocks that correspond to 8$\times$8 or 16$\times$16 blocks of the original image, is horizontally symmetric only around grid lines, which rest between columns at 16-pixel intervals.
A corollary is that the combination of demosaicing followed by JPEG compression can never be commutative with respect to flipping because these two imaging processes never have zero commutative residual for the same image width (since multiples of 16 are never odd).

The black-box analysis \footnote{All analysis and experiments in this and the next section are available in Python at \url{https://github.com/linzhiqiu/digital_chirality}.} of commutative residuals shown in Figure \ref{fig:commutativity_vs_width} reveals the grid structures underlying these processing algorithms, and illustrates how each grid structure impacts preservation of symmetries. When the commutative residual for any transformation for a given image width is zero, we know that this transformation preserves the symmetry of the original distribution with such width. 
\textit{\textbf{Hence, a key result is that demosaicing followed by JPEG compression always yields asymmetric distributions for images of arbitrary widths and heights even when the input distribution of these images is symmetric.}}
Since the combination of these two operations is very standard in imaging pipelines, we can expect results on synthetic data to apply to real images as well. 

When the commutative residual is non-zero, we hypothesize that in practice symmetries will be broken, i.e., a non-commutative imaging process will make an originally achiral distribution chiral. To test this hypothesis, we trained deep neural networks on three synthetic achiral distributions of Gaussian noise images, corresponding to three different square images sizes: one with odd width ($99\times 99$), one with even width \textbf{not} divisible by 8 ($100\times 100$), and one that is a multiple of 16 ($112\times 112$). 
To generate a sample image from each distribution, for each pixel, we sample its color value from a per-channel Gaussian distribution. The mean of each color channel (in the range $[0,1]$) was set to $(0.6, 0.5, 0.9)$ (for red, green, and blue, respectively), and the standard deviation to $(0.3, 0.25, 0.4)$. We use per-channel means and standard deviations (rather than the same Gaussian distribution for all channels) to reduce the source of symmetries present other than symmetry with respect to \tT. An example image from this distribution, before and after each processing step, is shown in Figure~\ref{fig:samples}.

\begin{figure}[t]
\centering
\begin{tabular}{cc}

\includegraphics[width=0.45\columnwidth]{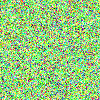} &
\includegraphics[width=0.45\columnwidth]{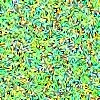} \\
(a) Original Image & (b) Demosaicing \\
\includegraphics[width=0.45\columnwidth]{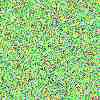} &
\includegraphics[width=0.45\columnwidth]{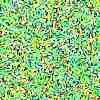} \\
(c) JPEG compression & (d) Demosaicing + JPEG \\
\end{tabular}
\caption{\textbf{A sample image from our Gaussian noise image distribution after different imaging operations}. This image is of size (100,100) and is generated using the Gaussian noise method described in Section~\ref{sec:nocropsize}.}\label{fig:samples}.
\end{figure}

If we apply each processing operation to all images from these three distribution over three image sizes, our hypothesis predicts that the operations will either preserve or break achirality according to Table~\ref{tab:nocrop}.

We train a binary chirality prediction (flip/no-flip) network using the same ResNet model as in the main paper (with randomly initialized weights) for each of these nine datasets (3 image sizes times 3 processing operations), with learning rates obtained from log-scale grid searches. As predicted by our hypothesis, trained network models can never achieve more than $50\%$ test classification accuracy on processed distributions that our analysis suggests to be achiral (i.e., commutative residual is zero). And, intriguingly, our trained network models achieve near perfect classification accuracy on processed distributions resulting from non-commutative imaging processes. This experiment hence gives empirical evidence that non-commutativity of a processing operation strongly suggests a loss of achirality.

\begin{table}[t]
\begin{center}
\begin{tabular}{lccc}
\toprule 
{Imaging Operation} & \multicolumn{3}{c}{Image size} \\
\cmidrule{2-4}
& 99 & 100 & 112 \\
 \midrule
Demosaicing & A & C & C\\
JPEG & C & C & A\\
Demosaicing+JPEG & C & C & C\\  \bottomrule
\end{tabular}
\end{center}
\caption{\textbf{Predicted chirality of three (initially achiral) Gaussian noise image distributions (corresponding to three different square image sizes) under each of three processing schemes.} `C' means chiral, and `A' means achiral. Explanation: 99px images should remain achiral under demosaicing, since the images have odd size. 112px images should remain achiral under JPEG compression since they have size divisible by 16. Everything else becomes chiral as hypothesized. We verify this table empirically by training network models on the nine distributions resulting from these transformations.}
\label{tab:nocrop}
\end{table}

\begin{figure*}[t]
\centering
\begin{tabular}{cccc}
\rotatebox{90}{Bilinear Resizing}
\includegraphics[width=1.5in]{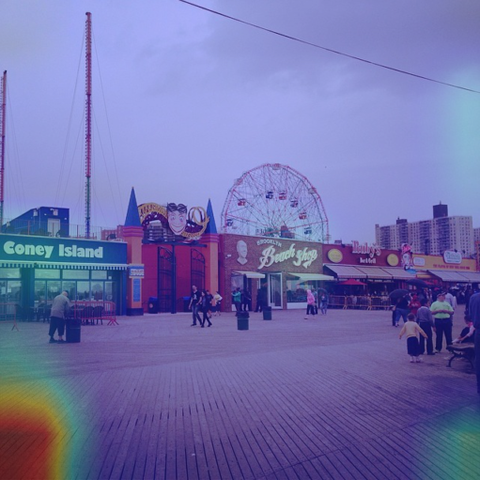} &
\includegraphics[width=1.5in]{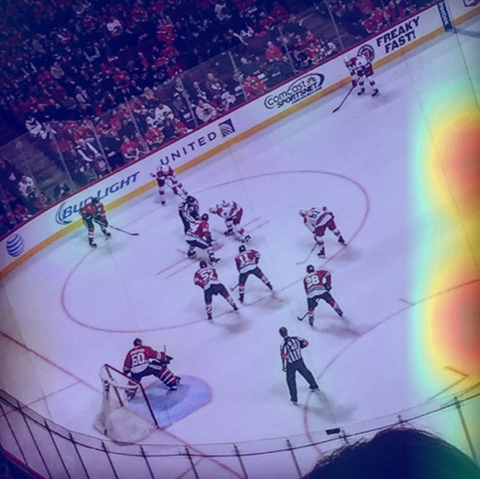}&
\includegraphics[width=1.5in]{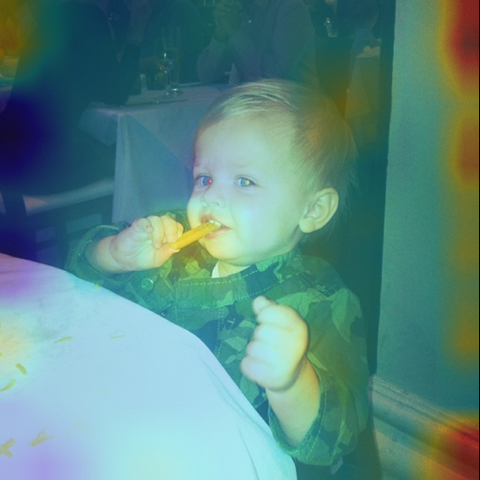}&
\includegraphics[width=1.5in]{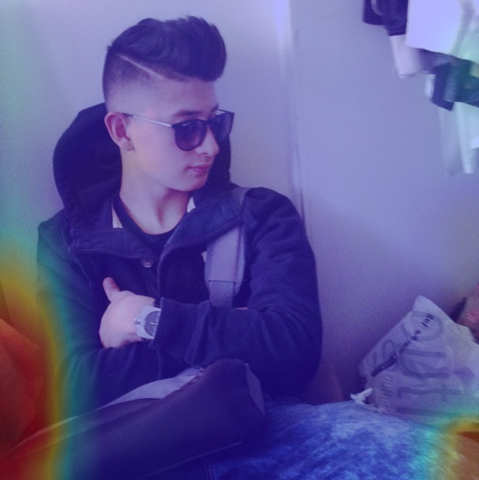}\\
\rotatebox{90}{Random Cropping}
\includegraphics[width=1.5in]{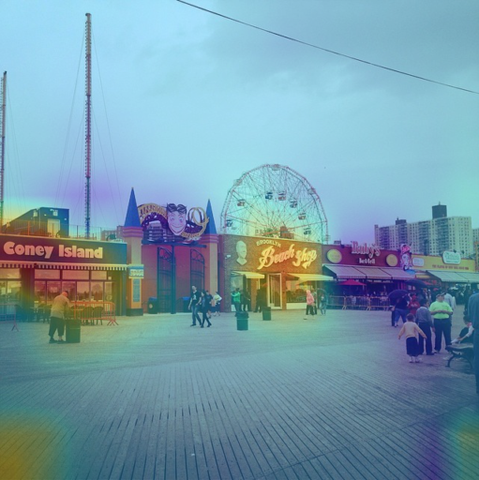}&
\includegraphics[width=1.5in]{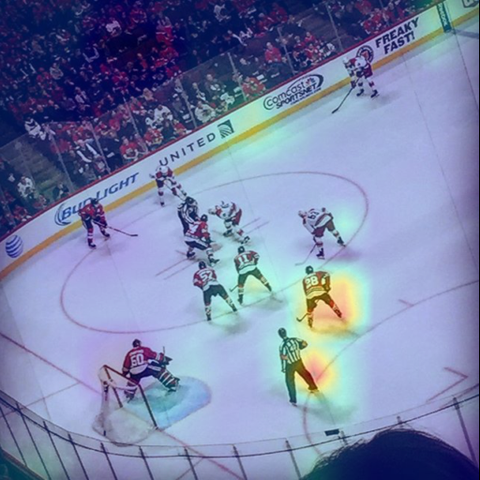}&
\includegraphics[width=1.5in]{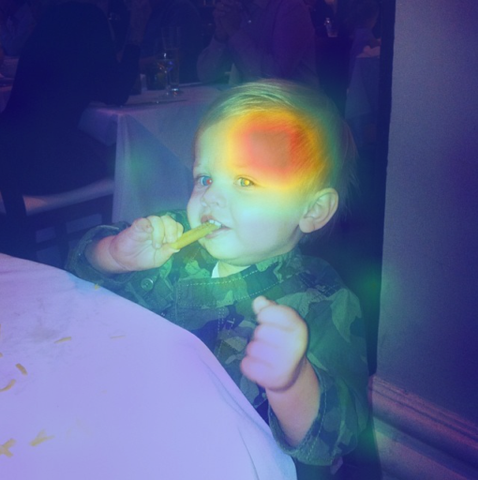}&
\includegraphics[width=1.5in]{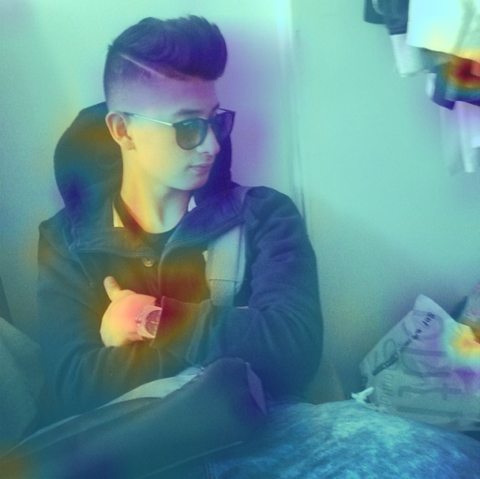}
\end{tabular}
\caption{Class Activation Maps (CAM) resulting from two preprocessing procedures used in training ImageNet-pretrained models on the chirality task: (top row) simple bilinear resizing and (bottom row) random cropping. Recall from the main paper that the CAM tends to fire on discriminative regions for classification. Note the heavy focus on edge and corner regions on bilinear resized images, likely due to edge artifacts caused by JPEG compression or demosaicing (or both). These artifacts disappear when random cropping is applied.}
\label{fig:resizing}
\end{figure*}

Note that this analysis assumes that we use the whole images after Bayer demosaicing and/or JPEG compression, i.e., no cropping. These results nicely mirror the situation of training networks on \emph{real} images with no random cropping, as described in the main paper. Figure~\ref{fig:resizing} shows that networks trained to classify chirality on resized (but not cropped) Instagram images often seem to focus on image evidence near boundaries (first row), which we hypothesis is due exactly to the kinds of chiral boundary artifacts discussed in this section in the context of JPEG compression. On the other hand, training with random cropping data augmentation yields networks that appear to focus on much more high-level features (second row). In the next section, we discuss the interaction of processing with random cropping (or image translation) and how the addition of random cropping can either make a chiral imaging process achiral, or can sometimes still introduce chirality.

\begin{figure}[t]
\begin{centering}
\includegraphics[width=0.9\columnwidth]{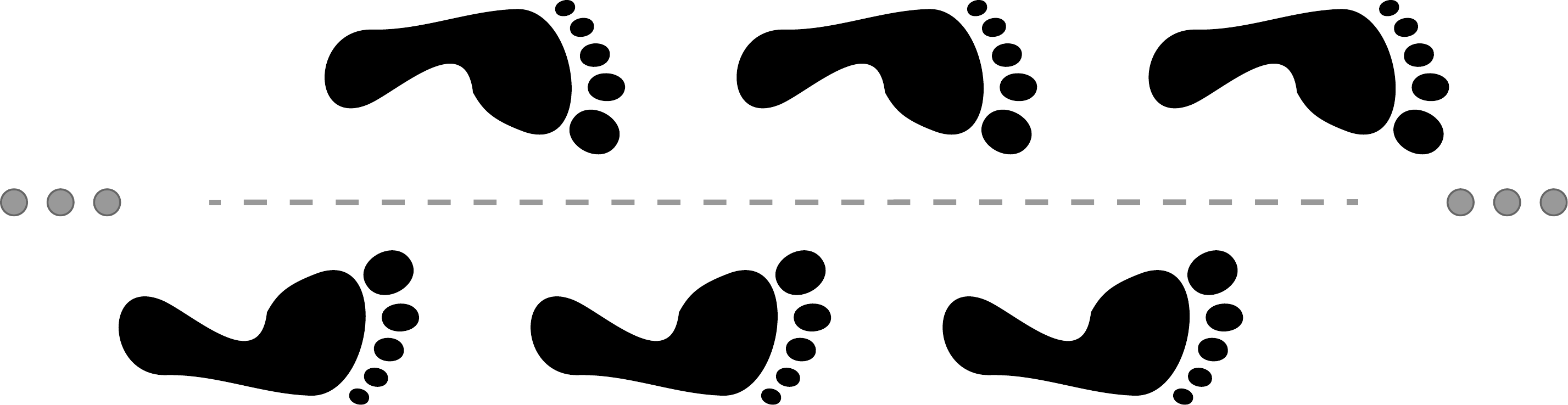}
\caption{\textbf{Glide Symmetry}: Human footprints often exhibit glide symmetry. The infinitely repeating footstep pattern shown here is equivalent to the reflection of a shifted version of itself.}\label{fig:glidesymmetry}
\end{centering}
\end{figure}

\section{Random Cropping and Glide Symmetry}
\label{sec:glidesymmetry}
Because our analysis makes few assumptions about $\T$, $\J$ and $\D$, we can apply it to other symmetries and data augmentation strategies used in computer vision. For example, translational invariance is a common and useful prior in images that is often applied to data through the use of random crops as a type of data augmentation. Here we consider how our theory can be used to understand the effect of random cropping on training.

\subsection{Random Cropping as a Symmetry Transform}
Doersch~\etal \cite{Doersch-15} found that when they trained a network to predict the relative position of different regions in an image, it would sometimes ``cheat” by utilizing chromatic aberration
for prediction. We can use our observation about commutativity to explain this behavior by considering a family of transformations in the 2D image plane. The self-supervision task used in Doersch~\etal requires the network to distinguish between different translations, which is only possible when the following symmetry does not hold:
\beq
\D(\imagex)=\D(\Tv(\imagex)),
\label{eq:translation}
\eeq
where \tTv{} is translation by some vector $\tvec\in\mathbb{R}^2$.
Our commutativity analysis tells us that this symmetry can be broken by any \tJ{} that does not commute (or glide commute) with translation. This agrees with the findings of Doersch~\etal that the network was able to ``cheat" using artifacts caused by chromatic aberration, which is not translation-invariant, as its effect is spatially varying.

\subsection{Random Cropping as an Image Operation}
If we revisit our analysis of commutative residuals under an assumption of translation invariance,
we can draw new conclusions about the chirality of demosaicing and JPEG compression. In particular, by incorporating translation invariance in the form of random cropping, we can change the chirality of these operations by creating the kind of permuted commutativity described in Section \ref{sec:tgroups}. In the case where permuted commutativity happens among groups related by translation, we call it \emph{glide-commutativity}.

\begin{figure*}
\begin{centering}
\includegraphics[width=0.9\textwidth]{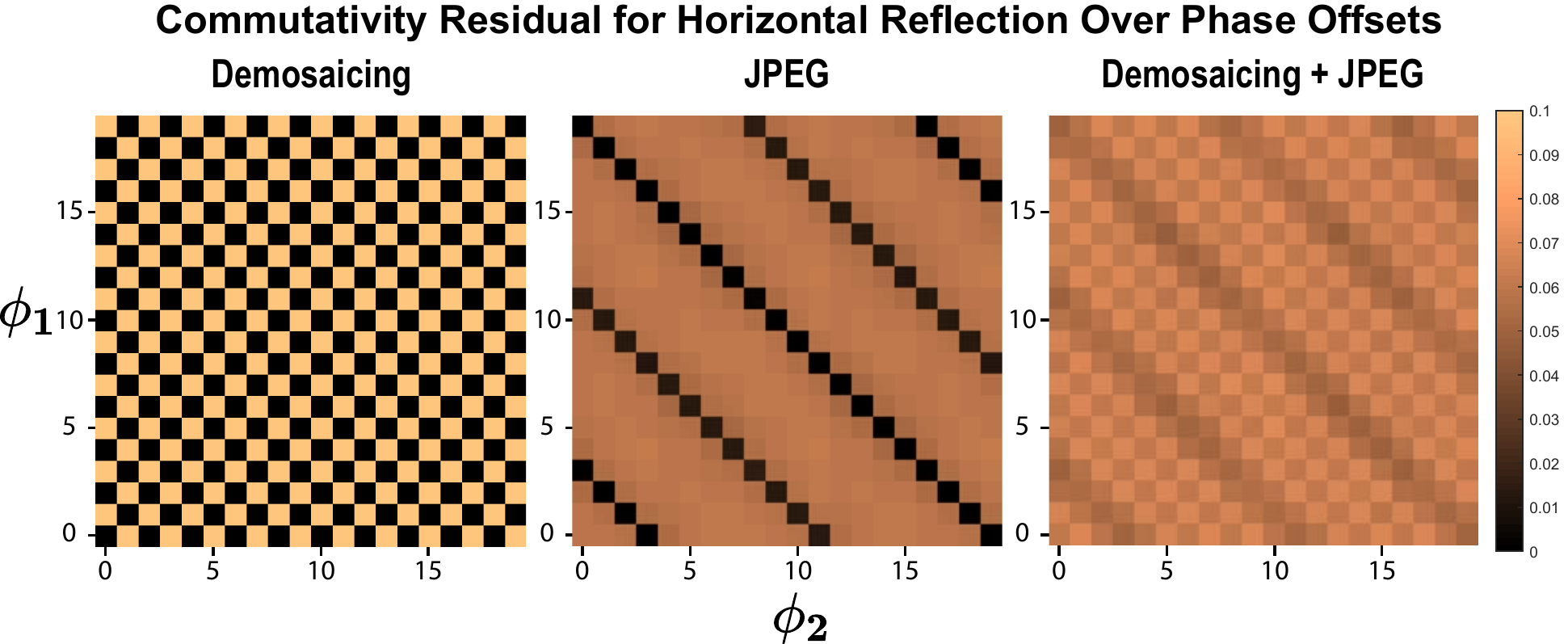}
\caption{\textbf{Glide Commutativity Residuals for demosaicing (left), JPEG compression (middle) and their composition (right)}: Each image shows the glide commutativity residual, measured in absolute average percent error per pixel, measured over different phase shifts.
For certain $\phi_1$ and $\phi_2$ we see commutativity in demosaicing and in jpeg compression alone. 
We do not see commutativity when both are applied.}\label{fig:glide_commitativity}
\end{centering}
\end{figure*}

To test for glide-commutativity, we must look for the permutation pattern described in Section \ref{sec:tgroups}. To do this, we first define a way of phase-shifting $\T(\J(\imagex))$ and $\J(\T(\imagex))$. For this, we define $\J\T_{\phi}(\imagex)$ and $\T\J_{\phi}(\imagex)$ as the process of:
\begin{packed_enum}
    \item Padding \timagex{} with a large, constant number of pixels on all sides.
    \item Translating the padded image by $\phi$.
    \item Applying \tT{} then \tJ{} for $\J\T_{\phi}(\imagex)$, or \tJ{} then \tT{} for $\T\J_{\phi}(\imagex)$.
    \item Translating by $\T(-\phi)$.
    \item Cropping out the previously padded pixels.
\end{packed_enum}
This has the effect of performing \tJ{} and \tT{} as if the image had occurred at a translation of $\phi$ from its original position. For grid-based algorithms like demosaicing and JPEG compression, this effectively phase-shifts the grid structure used in the algorithm.

To test for glide-commutativity we simply look for some repeating pattern of zeros in residuals of the form:
\beq
\comresi(\imagex,\phi_1,\phi_2)= \J\T_{\phi_1}(\imagex)-\T\J_{\phi_{2}}(\imagex)
\eeq
This pattern of zeros describes the permutation pattern described in Section \ref{sec:tgroups}. As the results in Figure \ref{fig:commutativity_vs_width} show, we verified that the vertical components of $\phi_{1}$ and $\phi_{2}$ do not matter. We therefore set them only to vary in the x dimension of the image. Figure \ref{fig:glide_commitativity} shows the residuals calculated for a range of phase shifts. We see that both demosaicing and JPEG compression appear to be glide-commutative due to the regular repeating pattern of zeros. However, the combination of demosaicing and JPEG compression does not appear to be glide-commutative, and we can see this is because zeros always occur at different phase shifts for each of the two operations.

\subsection{Empirical chirality in the presence of random crops}
The analysis from the previous section has simple implications (in terms of random cropping on images): (1) The distribution of random crops (while avoiding cropping from the boundary of 16 pixels) from an originally achiral distribution of images that has undergone either demosaicing or JPEG compression (but not both) should remain achiral. (2) On the other hand, surprisingly, random crops (avoiding a 16-pixel margin around the boundary in the cropped image) on that achiral distribution of images after both demosaicing and JPEG compression may likely become chiral.

To verify this analysis empirically, we again train ResNet models on the same achiral Gaussian distributions as introduced in Section~\ref{sec:nocropsize}. Specifically, we take random crops of size (512, 512) from the center (544, 544) of the (576,576) Gaussian noise images to avoid possible boundary effects from a 16-pixel margin. We train separate networks on each of the three output image distributions obtained from applying each of the three imaging operations (demosaicing, JPEG compression, and composition of demosaicing followed by JPEG compression) on the initial Gaussian noise image distribution. 
Note that, as before, we perform a log-scale grid search over learning rates.

The network training results show that neither demosaicing nor JPEG compression alone is sufficient to produce a chiral distribution under random cropping: models trained with such images fail to achieve more than 50\% accuracy. This suggests that chirality is preserved when those operations are applied in isolation.
But, as our theory predics, when both operations are applied the image distribution becomes chiral: the trained network achieves 100\% training and test accuracy. This supports our theoretical analysis of the glide-commutativity. Together, our analysis and empirical study suggest that chiral traces are left in photographs via the Bayer demosaicing and JPEG compression imaging processes.

\section{Conclusion}\label{sec:conclusion}
In this document we have developed theory relating the preservation of symmetry by various operations to their commutativity with corresponding symmetry transformations. We proposed the commutative residual as a tool for analyzing symmetry preservation, and predicting how different operations will affect the results of deep learning. We also extend our theory to random cropping and show how to evaluate glide commutativity to detect permuted commutativity. Our theoretical analysis and empirical experiment suggest that when demosaicing and JPEG compression are applied together, achiral distributions can becomes chiral, which has implications on several areas, including self-supervised learning, image forensics, data augmentation.

{\small
\bibliographystyle{supp/ieee}
\bibliography{supplemental}
}